\newif\ifRAL
\newif\ifTR
\newif\ifPrePrint
\newif\ifDraft
\let\proof\@undefined
\let\endproof\@undefined
\providecommand{\bm}{\pmb}
\newtheorem{thm}{Theorem}
\theoremstyle{definition}
\theoremstyle{remark}
\newtheorem*{rmk}{Remark}
\newcolumntype{Y}{>{\centering\arraybackslash}X}
\newcommand{\add}[1]{#1}
\newcommand{\remove}[1]{}
\newcommand{\vect}[1]{\bm{#1}}		
\newcommand{\matr}[1]{\bm{#1}}		
\newcommand{\nR}[1]{\mathbb{R}^{#1}}		
\newcommand{\matrice}[1]{\begin{bmatrix} #1 \end{bmatrix}}	
\newcommand{\upperRomannumeral}[1]{\uppercase\expandafter{\romannumeral#1}}	
\newcommand{\vSpace}{\;\,}
\newcommand{\diag}[1]{\text{diag}\left( #1 \right)}
\DeclarePairedDelimiter{\norm}{\lVert}{\rVert} 
\newcommand{\fig}{Fig.~}	
\newcommand{\tab}{Tab.~}	
\newcommand{\sect}{Sect.~}	
\newcommand{\theo}{Theorem~}	
\newcommand{\GenFrame}{\mathcal{F}}		
\newcommand{\origin}{O}						
\newcommand{\vX}{\vect{x}}					
\newcommand{\vY}{\vect{y}}					
\newcommand{\vZ}{\vect{z}}					
\newcommand{\pos}{\vect{p}}				
\newcommand{\dpos}{\vect{v}} 
\newcommand{\ddpos}{\dot{\dpos}} 
\newcommand{\dddpos}{\ddot{\dpos}} 
\newcommand{\vZero}{\vect{0}}				
\newcommand{\eye}[1]{\matr{I}_{#1}}		
\newcommand{\frameW}{\GenFrame_W}			
\newcommand{\originW}{\origin_W}		
\newcommand{\xW}{\vX_W}				
\newcommand{\yW}{\vY_W}				
\newcommand{\zW}{\vZ_W}				
\newcommand{\frameH}{\GenFrame_{H}}			
\newcommand{\originH}{\origin_H}			
\newcommand{\xH}{\vX_H}				
\newcommand{\yH}{\vY_H}				
\newcommand{\zH}{\vZ_H}				
\newcommand{\pH}{\pos_H}			
\newcommand{\dpH}{\dpos_H}			
\newcommand{\dpHAvg}{\bar{v}_H}
\newcommand{\ddpH}{\ddpos_H}		
\newcommand{\dddpH}{\dddpos_H}		
\newcommand{\massH}{m_H}				
\newcommand{\dampingH}{\matr{B}_H}	
\newcommand{\dampingHScalar}{{b}_H}	
\newcommand{\groundForce}{ \vect{f}_g}
\newcommand{\pHRef}{{\pH^r}}
\newcommand{\humanForce}{\vect{u}_H}
\newcommand{\gravityH}{\vect{g}_H}
\newcommand{\length}{\bar{l}_{c}}					
\newcommand{\springCoeff}{k_{c}}			
\newcommand{\cableForce}{\vect{f}_{c}}			
\newcommand{\cableForcePlanar}{\vect{f}_{cxy}}			
\newcommand{\dcableForce}{\dot{\vect{f}}_{c}}			
\newcommand{\cableForceX}{{f}_{cx}}			
\newcommand{\cableForceY}{{f}_{cy}}			
\newcommand{\cableAttitude}{\vect{l}_{c}}
\newcommand{\cableAttitudeNorm}{\norm{\vect{l}_{c}}}
\newcommand{\cableAttitudeNormOf}[1]{\norm{\vect{l}_{c}(#1)}}
\newcommand{\dCableAttitude}{\dot{\vect{l}}_{c}}
\newcommand{\tension}{t_{c}(\norm{\cableAttitude})}				
\newcommand{\tensionOf}[1]{t_{c}(#1)}				
\newcommand{\cableAttitudeNormInitial}{{l}_{c0}}
\newcommand{\frameR}{\GenFrame_{R}}			
\newcommand{\originR}{O_{R}}					
\newcommand{\xR}{\vX_{R }}								
\newcommand{\yR}{\vY_{R }}								
\newcommand{\zR}{\vZ_{R }}								
\newcommand{\pR}{\pos_{R }}						
\newcommand{\dpR}{\dpos_{R }}					
\newcommand{\ddpR}{\ddpos_{R }}				
\newcommand{\dddpR}{\dddpos_{R }}				
\newcommand{\dampingA}{\matr{B}_{A}}		
\newcommand{\dampingAScalar}{{b}_{A}}		
\newcommand{\dampingAScalarPosCtrl}{{b}_{A}^{\vect{\Gamma}}}		
\newcommand{\dampingAScalarPosCtrlHVel}{{b}_{A}^{\vect{\Gamma}_H}}		
\newcommand{\inertiaA}{\matr{M}_{A}}		
\newcommand{\uA}{\vect{u}_{A}}					
\newcommand{\state}{\vect{x}}						
\newcommand{\stateSet}{\mathcal{X}}						
\newcommand{\stateEq}{\bar{\vect{x}}}						
\newcommand{\dstate}{\dot{\vect{x}}}						
\newcommand{\dynamics}[1]{\vect{f}(#1)}
\newcommand{\modelError}{\vect{\delta}}
\newcommand{\kpH}{k_{H}}	
\newcommand{\KpH}{\matr{K}_{H}}	
\newcommand{\cableForceZDes}{{f}_{z}}			
\newcommand{\cableForceRef}{{\cableForce^r}}		
\newcommand{\errorR}{\vect{e}_R}
\newcommand{\pRRef}{{\pR^r}}
\newcommand{\positionController}{\vect{\Gamma}(\state)}
\newcommand{\positionControllerSat}{\vect{\gamma}}
\newcommand{\positionControllerSatPlanar}{\vect{\gamma}_{xy}}
\newcommand{\positionControllerSatX}{{\gamma}_x}
\newcommand{\positionControllerSatY}{{\gamma}_y}
\newcommand{\positionControllerSatZ}{{\gamma}_z}
\newcommand{\positionControllerHvel}{\vect{\Gamma}_H(\state)}
\newcommand{\trajectoryPosCtrlSat}{\state^\star(t)}
\newcommand{\velPosCtrlSat}{\vect{v}^\star}
\newcommand{\lyapunovFunction}{V(\state)} 
\newcommand{\dLyapunovFunction}{\dot{V}(\state)} 
\newcommand{\dampingLyapunov}{\matr{B}}
\newcommand{\invariantSet}{\Omega_{\alpha}}		
\newcommand{\invariantSetZero}{\Omega_{0}}		
\newcommand{\maxInvariantSet}{\mathcal{M}}
\newcommand{\dVZeroSet}{\mathcal{E}}				
\newcommand{\pRerror}{{\vect{p}'_R}}
\newcommand{\dpRerror}{{{\vect{v}}'_R}}
\newcommand{\ddpRerror}{{\dot{\vect{v}}'_R}}
\newcommand{\pHerror}{{\vect{p}'_H}}
\newcommand{\dpHerror}{{{\vect{v}}'_H}}
\newcommand{\ddpHerror}{{\dot{\vect{v}}'_H}}
\newcommand{\pRPosCtrlSat}{\pR^\star}
\newcommand{\pHPosCtrlSat}{\pH^\star}
\newcommand{\stateError}{\state'}
\newcommand{\stateSetError}{\stateSet'}
\newcommand{\dstateError}{{\dstate'}}
\newcommand{\dynamicsError}[1]{\vect{f}'(#1)}
\newcommand{\lyapunovFunctionError}{V(\stateError)} 
\newcommand{\dLyapunovFunctionError}{\dot{V}(\stateError)} 
\newcommand{\expTime}{T}
\newcommand{\paperTitle}{Human-State-Aware Controller for a Tethered Aerial Robot Guiding a Human by Physical Interaction}
\author{Mike Allenspach, Yash Vyas, Matthias Rubio, Roland Siegwart, \textit{Fellow, IEEE},\\ and Marco Tognon, \textit{Member, IEEE}
	
	\thanks{Manuscript received September 09, 2021; accepted December 28, 2021. Date of publication January 18, 2022; date of current version February 2, 2022.}
	\thanks{This work was supported by ETH Career Seed Grant SEED-04 20-2 (AEROGUIDE). This paper was recommended for publication by Associate Editor W. Zhang and Editor J. Yi upon evaluation of the reviewers' comments. \textit{(Corresponding author: Mike Allenspach)}
	} 
	
    \thanks{The authors are with Autonomous Systems Lab, ETH Zurich, 8092 Switzerland (e-mail: \href{mailto:amike@ethz.ch}{amike@ethz.ch}; \href{mailto:yavyas@student.ethz.ch}{yavyas@student.ethz.ch}; \href{mailto:mrubio@student.ethz.ch}{mrubio@student.ethz.ch}; \href{mailto:rsiegwart@ethz.ch}{rsiegwart@ethz.ch}; \href{mailto:mtognon@ethz.ch}{mtognon@ethz.ch}). }
    \thanks{This letter has supplementary downloadable material available at \url{https://doi.org/10.1109/LRA.2022.3143574}, provided by the authors.}
	\thanks{Digital Object Identifier 10.1109/LRA.2022.3143574}	
    }
\author{Mike Allenspach, Yash Vyas, Matthias Rubio, Roland Siegwart and Marco Tognon
    \thanks{Mike Allenspach, Yash Vyas, Matthias Rubio, Roland Siegwart, Marco Tognon are with the Autonomous Systems Lab, ETH Zurich, 8092 Switzerland. {\tt \footnotesize \href{mailto:amike@ethz.ch}{mailto:amike@ethz.ch}, \href{mailto:mtognon@ethz.ch}{mtognon@ethz.ch}. }}
	\thanks{This work was supported by ETH Career Seed Grant SEED-04 20-2 (AEROGUIDE)}
}
\title{\paperTitle}
\title{\bf \paperTitle}
\def\ps@titlepagestyle{
	\def\@oddfoot{}\def\@evenfoot{}
	\def\@oddhead{\textcolor{red}{\sf\footnotesize Preprint version, final version at http://ieeexplore.ieee.org/ \hfill IEEE Robotics and Automation Letters 2022}}
	\def\@evenhead{\textcolor{red}{\sf\footnotesize  Preprint version, final version at http://ieeexplore.ieee.org/  \hfill IEEE Robotics and Automation Letters 2022}}%
}%
\def\ps@headings{
	\def\@oddfoot{\textcolor{red}{\sf\footnotesize  Preprint version, final version at http://ieeexplore.ieee.org/ \hfill \thepage \;\;~\hfill~\hfill IEEE Robotics and Automation Letters 2022}}\def\@evenfoot{\hfill\thepage\hfill}
	\def\@oddhead{}\def\@evenhead{}%
}%
\def\ps@titlepagestyle{
	\def\@oddfoot{}\def\@evenfoot{}
	\def\@oddhead{\textcolor{red}{\sf Draft version  \hfill Confidential}}
	\def\@evenhead{\textcolor{red}{\sf  Draft version  \hfill Confidential}}%
}%
\def\ps@headings{
	\def\@oddfoot{\textcolor{red}{\sf  Draft version  \hfill Confidential}}\def\@evenfoot{\hfill\thepage\hfill}
	\def\@oddhead{}\def\@evenhead{}%
}%
\begin{document}

\maketitle

\begin{abstract}
With the rapid development of Aerial Physical Interaction, the possibility to have aerial robots physically interacting with humans is attracting a growing interest.
\remove{Although physical Human-Robot Interaction with ground robots has been the focus of numerous studies, we cannot say the same for aerial robots.}%
In our one of our previous works~\cite{tognon2021physical}, we considered one of the first systems in which a human is physically connected to an aerial vehicle by a cable. 
There, we developed a compliant controller that allows the robot to pull the human toward a desired position  using forces only as an indirect communication-channel.
However, this controller is based on the robot-state only, which makes the system not adaptable to the human behavior, and in particular to their walking speed.
This reduces the effectiveness and comfort of the guidance when the human is still far from the desired point. 
In this \add{paper,}\remove{manuscript} we formally analyze the problem and propose a human-state-aware controller that includes a human's velocity feedback.
We theoretically prove and experimentally show that this method provides a more consistent guiding force which enhances the guiding experience.

\end{abstract}

\ifRAL 
\begin{IEEEkeywords}
	Physical human-robot interaction, aerial systems: applications, compliance and impedance control
\end{IEEEkeywords}
\else 
{} 
\fi

\section{INTRODUCTION}\label{sec:intro}
\ifRAL
\IEEEPARstart{A}{erial}
\else
Aerial
\fi
Physical Interaction (APhI) is seeing a constant growth of interest. 
Motivated by scientific challenges, as well as industrial and economical potentials, the robotics community is conceiving and studying new aerial robots able to interact with and manipulate the environment~\cite{2021g-OllTogSuaLeeFra}.
\remove{Striving to perform manipulation tasks of increasing complexity, e.g., pushing, sliding~\cite{2019e-TogTelGasSabBicMalLanSanRevCorFra}, transportation and manipulation of articulated objects~\cite{2021-LeeSeoJanLeeKim}, the research community proposed various solutions to enhance physical interaction capabilities of aerial vehicles. }
Those go from new multi-rotor platforms that are fully-, omni-, and over-actuated~\cite{2021f-HamUsaSabStaTogFra}, to recent aerial manipulator~\cite{2021-BodTogSie,2020-SuaReaVegHerRodOll}. 

With the advances in manipulation capabilities of aerial robots, it is natural that the research community starts looking with interest at one of the most challenging types of physical interaction: the one with humans~\cite{2021-SelCogNikIvaSic}.
The highly complex dynamics of humans, the instability of aerial robots, and the strict requirement for safety, make \add{\textit{physical Human-Aerial Robot Interaction} (pHARI)}\remove{with aerial robots} an extremely hard to solve, and by far poorly addressed, research problem.  
Most of the works presented so far focus on contactless interaction between human and aerial robots.
These investigate \begin{inparaenum}[i)]
	\item bilateral teleoperation methods to help humans control single and multiple aerial vehicles~\cite{2014-MerStrCar,2020-LeeBalSarDeSCoeShiKimTriKon};
	\item the use of natural communication means like gestures, speech, and gaze direction to let the human interface with the robot~\cite{2017-PesHitKau,cauchard2015drone}; or
	\item the social acceptance of {aerial} robots~\cite{2017-AchBevDun}.
\end{inparaenum}
%
\begin{figure}
	\centering
	\includegraphics[width = \columnwidth]{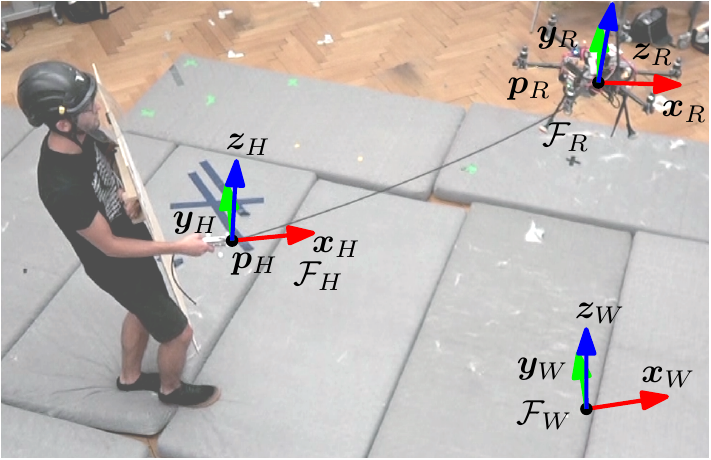}
	\caption{Representation of the aerial human-tethered guiding system.}
	\label{fig:model}
\end{figure}
\cite{2013-AugDan} presents a preliminary work addressing \add{pHARI}\remove{ for aerial vehicles}, but the interaction with the human is limited to simple contacts.
On the other hand, in~\cite{tognon2021physical} we considered one of the first systems in which a human and an aerial robot are tightly coupled by a physical connection.
Taking inspiration by the literature of tethered aerial vehicles used for physical interaction~\cite{2017a-TogFra, 2020g-TogFra}, the human holds a handle which is in turn connected to an aerial vehicle through a cable (see \fig\ref{fig:model}).
For this system, we designed an admittance-based control strategy that allows the robot to safely ``physically guide'' the human toward a desired position by exploiting the cable as a force-based communication channel.
Although the community already proposed contactless approaches for guiding humans with aerial vehicles, e.g., based on sound~\cite{2015-AviFunHen} or vision~\cite{2015-MueMui}, the use of forces as more direct feedback represents a strong novelty that \remove{particularly}suits for persons with \add{visual/hearing} limitations.

In~\cite{tognon2021physical} the robot is shown to be able to steer the human to the desired location, both in theory and real experiments. 
However, during the tests we noticed that if the human is very confident and follows the robots with a relatively high walking speed, the felt guiding force would reduce with respect to (w.r.t.) the desired one. In the extreme case, the cable can become slack, effectively making the human stop and wait for a non-zero guiding force.
This clearly represents an undesired behavior that reduces the effectiveness and comfort of the aerial guidance system.
Intuitively, this is \add{because}\remove{caused by the fact that} the controller in~\cite{tognon2021physical} is based on the robot-state only and does not consider any feedback w.r.t. the human-state.
This shows the need of more human-related feedback and adaptation laws in robot controllers for pHRI tasks.

This requirement motivated the present work in which we first formally analyze the mentioned problem, finding the mathematical reasoning behind it.
The study of the equilibria and relative stability of the system when the human is pulled by a constant force, formally confirms our first hypothesis: a reduction of the felt force by the human w.r.t. the desired one that depends on the human velocity. 
With a better definition and understanding of the problem, we propose a new human-state-aware controller aiming at a zero guiding force error, independently to human walking speed. 
In particular, we modify the controller in~\cite{tognon2021physical} to include an additional feedback w.r.t. the human velocity.
Again, by a formal study of the equilibria and stability of the system, we theoretically confirm that when the desired guiding force is constant, the force error asymptotically converges to zero.
Real experiments comparing the two strategies validate the theoretical findings.

\section{MODELING}\label{sec:modeling}
We study a system where a human and an aerial robot physically interact through a cable (see \fig\ref{fig:model}). The latter connects the vehicle to a handle held by the human.
\add{The system modeling follows} the approach in~\cite{tognon2021physical}. 
%
Considering a generic frame $\GenFrame_\star$, we define by $\origin_\star$ its origin and by $\{\vX_\star,\vY_\star,\vZ_\star\}$ its unit axes.
We then define an \textit{inertial frame}, $\frameW = \{\originW,\xW,\yW,\zW\}$ with $\zW$ opposite to gravity.

To describe the \add{human-state}\remove{ of the human}, we define a \textit{body frame},  $\frameH = \{\originH,\xH,\yH,\zH\}$, rigidly attached to the handle such that $\originH$ coincides with the point where the cable is attached. 
The position and linear velocity of the human expressed in $\frameW$ are given by the vectors $\pH \in \nR{3}$ and $\dpH  \in \nR{3}$, respectively.
Similarly, we describe the \add{aerial vehicle-state}\remove{ of the aerial vehicle} by the one of the \textit{body frame} $\frameR = \{ \originR,\xR,\yR,\zR\}$, rigidly attached to it.
Its position and linear velocity are given by the vectors $\pR\in \nR{3}$ and $\dpR \in \nR{3}$, respectively, both expressed in $\frameW$. 
We neglect the rotational part since it is not relevant for the following derivations.

As commonly done in the state of the art~\cite{1998-IkeMiz,2012-MorLawKucSezBasHir,2007-DucGos}, we approximate the human's dynamics with the one of a mass-spring-damper system:
\begin{align}
	\massH \ddpH = -\gravityH - \dampingH\dpH + \cableForce + \groundForce \add{+ \humanForce},
	\label{eqn:humanModel}
\end{align}
where $\massH \in \nR{}_{>0}$ and the symmetric positive definite matrix $\dampingH\in \nR{3 \times 3}_{>0}$ are the  mass and apparent damping, respectively; $\cableForce 
\in \nR{3}$ is the cable force applied to the human at $\originH$; $\gravityH = \massH g \zW$; $g \in \nR{}_{>0}$  is the gravitational constant; and 
$\groundForce$ is the ground reaction force. \add{Further, $\humanForce\in\nR{3}$ is the sum of all forces applied by the human that generate a translational motion.}
We assume that the forces in \eqref{eqn:humanModel} are such that $\ddpH^\top \zW = \dpH^\top \zW = 0$, i.e., the human is constrained on the ground. 

To ensure a safe physical interaction with the environment, and so with the human, a common solution consists of using an \textit{admittance control strategy} together with a position controller on the robot side~\cite{2016-HadCro}. 
This strategy allows to make the robot behave as a second order system with a specific admittance (mass, stiffness, and damping) subjected to the measured external force acting on the robot.
Under this control scheme, closed loop dynamics of the robot are:
\begin{align}\label{eqn:admittance}
	\ddpR = \inertiaA^{-1}\left( -\dampingA\dpR - \cableForce + \uA \add{+ \modelError} \right),
\end{align}
where the two symmetric positive definite matrices $\inertiaA,  \dampingA \in \nR{3 \times 3}_{>0}$ are the virtual inertia and damping of the robot.
$\uA \in \nR{3}$ is an additional input that will be defined in the following to achieve the desired control goal.
\add{$\modelError \in \nR{3}$ is a bounded variable that takes into account all the tracking errors due to model mismatches and uncertainties, input and state limits, external disturbances, and estimation errors.}

Finally, we model the cable as a unilateral spring along its principal direction. Following the Hooke's law, we write the cable force applied to $\originH$ as
\begin{align}\label{eqn:cableForce}
	\cableForce &= \tension {\cableAttitude}/{\cableAttitudeNorm}\\
	\tension &= \begin{cases}
					\springCoeff(\cableAttitudeNorm - \length)& \text{ if } \cableAttitudeNorm - \length > 0 \\
					0	&	\text{ otherwise }
				\end{cases},	
\end{align}
where $\tension$ represents the cable internal force intensity, $\cableAttitude = \pR - \pH$, $\length \in \nR{}_{> 0}$ is the constant nominal length of the cable, and $\springCoeff \in \nR{}_{> 0}$ is the constant elastic coefficient. 
The force produced at the other end of the cable, namely to the robot at $\originR$, is equal to $-\cableForce$ as shown in \eqref{eqn:admittance}.

To write the full system dynamics, i.e., human and controlled robot connected by the cable, we define the system state vector as $\state = [\pH^\top \vSpace \dpH^\top  \vSpace \pR^\top  \vSpace \dpR^\top]^\top \in \stateSet \subset \nR{12}$. Considering~\eqref{eqn:humanModel} and~\eqref{eqn:admittance}, we can write the state dynamics as $\dstate = \dynamics{\state,\uA}$, where:
%
\begin{align}
	\dynamics{\state,\uA} &= \matrice{\dpH \\  \frac{1}{\massH}\left(-\gravityH - \dampingH\dpH + \cableForce  +  \groundForce  \right) \\ \dpR \\ \inertiaA^{-1}\left( -\dampingA\dpR - \cableForce + \uA \right) },
	\label{eqn:closedLoopSystemDyn}
\end{align}
and $\cableForce$ is computed as in \eqref{eqn:cableForce}. 
\add{For the sake of deriving the control law and proving the asymptotic stability in nominal conditions, we consider zero human forces $\humanForce=\vZero$ and modeling errors $\modelError=\vZero$. In \sect\ref{ssec:robustness} we shall discuss the robustness of the method when these conditions are not met.} 


\section{PROBLEM FORMULATION AND ANALYSIS}\label{sec:control}
In~\cite{tognon2021physical} we already designed a robot's position-based control law $\positionController$ that makes the robot able to steer the human toward a desired position $\pHRef \in \nR{3}$ (such that $\zW^\top\pHRef = 0$).
In the next subsection we recall the main results and highlight encountered problems that are then formally analyzed and addressed by a new controller.

\subsection{Equilibria and Stability Analysis under \texorpdfstring{$\positionController$}{s}}

\subsubsection{Steady state solution}
In regular conditions, the following theorem holds:
\begin{thm}\label{thm:stability}
	Given the desired human position $\pHRef \in \nR{3}$, such that $\zW^\top\pHRef = 0$, we consider the control law $\positionController$\remove{ such that}
	\begin{align}
		\uA = \positionController := \KpH \errorR + \cableForceRef,
		\label{eqn:controller}
	\end{align}
	where $\KpH = \diag{\kpH,\kpH,0}$, $\kpH \in \nR{}_{> 0}$, $\errorR = \pRRef - \pR$ is the robot position error, $\pRRef$ is the robot \add{reference} position, and $\cableForceRef \in \nR{3}$ is a constant forcing input, defined as follow 
	\begin{align}
		\cableForceRef &= \cableForceZDes\zW \label{eqn:regulation:cableForceRef} \\
		\pRRef &= \pHRef + \cableForceRef \left( \frac{1}{\springCoeff} + \frac{\length}{\norm{\cableForceRef}} \right),\label{eqn:regulation:pRRef}
	\end{align}
	with $\cableForceZDes \in \nR{}$.
	Then, \remove{considering the system \eqref{eqn:closedLoopSystemDyn} under the control law  \eqref{eqn:controller},}
	the zero velocity equilibrium $\stateEq = [\pHRef^\top \vSpace \vZero^\top \vSpace \pRRef^\top \vSpace \vZero^\top]^\top \in \stateSet$ \add{of the system \eqref{eqn:closedLoopSystemDyn} under the control law \eqref{eqn:controller}} is asymptotically stable if $0 < \cableForceZDes < \massH g$.
\end{thm}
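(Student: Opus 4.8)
The plan is to first confirm that $\stateEq$ is an equilibrium and then prove asymptotic stability via an energy-based Lyapunov function together with LaSalle's invariance principle. I begin by evaluating the cable at $\stateEq$: from \eqref{eqn:regulation:pRRef} one has $\cableAttitude = \pRRef - \pHRef = \cableForceRef\left(1/\springCoeff + \length/\norm{\cableForceRef}\right)$, and since $\cableForceRef = \cableForceZDes\zW$ with $\cableForceZDes > 0$ this is a purely vertical cable of length $\norm{\cableAttitude} = \length + \cableForceZDes/\springCoeff > \length$. Hence the cable is taut with tension $\cableForceZDes$, giving $\cableForce = \cableForceRef$ at $\stateEq$. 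Substituting into \eqref{eqn:closedLoopSystemDyn} with $\errorR = \vZero$ makes the robot row vanish, while the vertical balance of the human row fixes the ground reaction to $\groundForce = (\massH g - \cableForceZDes)\zW$. Both inequalities of the statement enter exactly here: $\cableForceZDes > 0$ keeps the cable taut, and $\cableForceZDes < \massH g$ keeps the normal force $\groundForceIntensity \ge 0$, so the human stays on the ground.

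Since the cable is strictly taut at $\stateEq$, in a neighborhood we have $\norm{\cableAttitude} > \length$ and the unilateral spring is smooth, deriving from the potential $U_c = \tfrac12\springCoeff(\norm{\cableAttitude} - \length)^2$. I restrict attention to this taut region and to the horizontal human coordinates, the ground confining the human with a purely normal reaction. As Lyapunov candidate I take the shifted total mechanical energy
\[
\lyapunovFunction = \tfrac12\massH\norm{\dpH}^2 + \tfrac12\dpR^\top\inertiaA\dpR + U_c + \tfrac12\errorR^\top\KpH\errorR - \cableForceRef^\top\pR + c,
\]
with $c$ normalizing $\lyapunovFunction(\stateEq) = 0$. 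Its potential gradient equals $\cableForce - \KpH\errorR - \cableForceRef$ on the robot and $-\cableForce$ (horizontally) on the human, both vanishing at $\stateEq$, so $\stateEq$ is a critical point.

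Positive definiteness of $\lyapunovFunction$ near $\stateEq$ follows from the potential Hessian, which decouples by axis because the cable Jacobian at $\stateEq$ is $\diag{k_\perp, k_\perp, \springCoeff}$ with $k_\perp = \springCoeff(1 - \length/\norm{\cableAttitude}) > 0$: each horizontal axis contributes a coupled human--robot stiffness $\smallmatrice{k_\perp & -k_\perp \\ -k_\perp & k_\perp + \kpH}$ of determinant $k_\perp\kpH > 0$, and the robot vertical stiffness is $\springCoeff > 0$. Differentiating along \eqref{eqn:closedLoopSystemDyn} under $\uA = \positionController$, the conservative terms cancel and only dissipation remains,
\[
\dLyapunovFunction = -\dpH^\top\dampingH\dpH - \dpR^\top\dampingA\dpR \le 0 ,
\]
as $\dampingH, \dampingA$ are positive definite. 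On $\{\dLyapunovFunction = 0\}$ we have $\dpH = \dpR = \vZero$; invariance then forces the accelerations to vanish, which by the strict minimum of the potential pins the configuration to $(\pHRef, \pRRef)$. Applying LaSalle's principle on a sublevel set of $\lyapunovFunction$ contained in the taut region yields asymptotic stability of $\stateEq$.

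The main obstacle I anticipate is not the dissipation identity but the bookkeeping forced by the two constrained/nonsmooth features of the model: I must verify that the horizontal restriction of the human dynamics is self-consistent, with the ground reaction remaining normal and non-negative along trajectories, and that the LaSalle sublevel set stays strictly inside $\stateSetTaut$ so that $U_c$ is genuinely the governing potential throughout the invariance argument. Both the diagonal cable Jacobian and the vanishing horizontal cable force at $\stateEq$ hinge on the equilibrium cable being exactly vertical, which is precisely what the tailored choice of $\pRRef$ in \eqref{eqn:regulation:pRRef} guarantees.
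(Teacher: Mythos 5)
Your proof is correct and follows, in essence, the same route the paper relies on (the paper itself defers this particular proof to~\cite{tognon2021physical}, but the identical construction is reproduced in its proofs of Theorems~\ref{thm:stability:posCtrlSat} and~\ref{thm:stability:velFeedback}): a shifted mechanical-energy Lyapunov function whose derivative collapses to the pure dissipation $-\dpH^\top\dampingH\dpH-\dpR^\top\dampingA\dpR$, followed by LaSalle's invariance principle. The one substantive difference is how the unilateral cable is encoded. The paper's potential term is $\int_{\cableAttitudeNormInitial}^{\cableAttitudeNormOf{t}}\tensionOf{\tau}\,d\tau-\cableAttitude^\top\cableForceRef$, which is continuously differentiable across the slack/taut transition and radially unbounded, so its sublevel sets are compact with no confinement argument and the convergence claim is global. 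You instead use the quadratic potential $\tfrac{1}{2}\springCoeff(\cableAttitudeNorm-\length)^2$, valid only on the taut region; this buys a cleaner Hessian computation for positive definiteness (and your explicit observation that $0<\cableForceZDes<\massH g$ is precisely what keeps the cable taut and the ground reaction nonnegative at $\stateEq$ is a point the paper leaves implicit), at the price of a strictly local statement: as you note, the LaSalle sublevel set must be shrunk to remain inside $\stateSetTaut$. That suffices for the theorem as stated but does not recover the global convergence of the cited proof. Finally, your linear term $-\cableForceRef^\top\pR$ differs from the paper's $-\cableAttitude^\top\cableForceRef$ only by $\cableForceRef^\top\pH=\cableForceZDes\,\zW^\top\pH$, which is constant under the ground constraint, so the two candidates agree up to the normalizing constant.
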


Thanks to the proof available in~\cite{tognon2021physical}, we know that at steady state the system will converge to $\stateEq$, i.e., the human will be in the desired position. 
Nevertheless, during experiments we noticed an undesired behavior during transient, in particular when the human is far from the desired position. 
While the human walks following the cable force provided by the robot, the latter is less than the desired one. 
Moreover, the force error increases with the human speed increasing.
This makes the guidance less effective. 
In extreme cases, the force might be very close to zero or even be zero, forcing the human to stop and wait for the tension on the cable to be re-established.

\subsubsection{Steady state solution under constant control input}
To better characterize this phenomena, we consider the case in which the human is far from the desired position. In this condition, we saturate $\errorR$ in \eqref{eqn:controller} to avoid extreme forces applied to the human. 
It results that $\positionController = \positionControllerSat$ where $\positionControllerSat\add{=[{ \positionControllerSatX \vSpace \positionControllerSatY \vSpace \positionControllerSatZ}]^\top}\in\nR{3}$ is a constant value that depends on the imposed error saturation.  
In the next theorem, we characterize the equilibria and its stability for the system~\eqref{eqn:closedLoopSystemDyn} subjected to the constant control input $\uA = \positionControllerSat$.   

\begin{thm}\label{thm:stability:posCtrlSat}
	Under the condition $\uA = \positionController = \positionControllerSat$ constant, the constant velocity trajectory $\trajectoryPosCtrlSat$ where 
	\begin{align}
		\dpH(t) = \dpR(t) = \velPosCtrlSat := \left(\dampingA+\dampingH\right)^{-1}\add{\matrice{\positionControllerSatX \vSpace \positionControllerSatY\vSpace 0}^\top}, \label{eqn:equilibria:positionControllerSat:velocity}
	\end{align}
	\remove{for every }$\forall t \in \nR{}_{\geq 0}$, is asymptotically stable for the system \eqref{eqn:closedLoopSystemDyn}.
\end{thm}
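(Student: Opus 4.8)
The plan is to convert the moving, constant-velocity trajectory $\trajectoryPosCtrlSat$ into a genuine equilibrium point by passing to \emph{relative} coordinates, and then to certify asymptotic stability with an energy-type Lyapunov function combined with LaSalle's invariance principle. The dynamics \eqref{eqn:closedLoopSystemDyn} depend on the absolute positions only through the cable vector $\cableAttitude=\pR-\pH$, so the natural reduced state is $(\cableAttitude,\dpH,\dpR)$: along $\trajectoryPosCtrlSat$ this triple is constant, which factors out the drift of the common centre along $\velPosCtrlSat$ and turns the trajectory into a single point of the reduced system.

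First I would verify that \eqref{eqn:equilibria:positionControllerSat:velocity} is an invariant solution and extract the associated equilibrium cable force $\cableForceEq$ and cable vector $\bar{\cableAttitude}$. Imposing $\ddpH=\ddpR=\vZero$ and $\dpH=\dpR=\velPosCtrlSat$ in \eqref{eqn:humanModel}--\eqref{eqn:admittance}, the horizontal balances read $\cableForceEq=\dampingH\velPosCtrlSat$ (human) and $\dampingA\velPosCtrlSat+\cableForceEq=\positionControllerSat$ (robot); summing them and projecting onto the $xy$-plane gives exactly $(\dampingA+\dampingH)\velPosCtrlSat=\matrice{\positionControllerSatX\vSpace \positionControllerSatY\vSpace 0}^\top$, i.e.\ \eqref{eqn:equilibria:positionControllerSat:velocity}, while the $z$-balance fixes $\cableForceEq^\top\zW$. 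Since a nonzero tension is required, the equilibrium lies on the taut branch of \eqref{eqn:cableForce}, and $\bar{\cableAttitude}$ is the unique taut vector with $\tensionOf{\norm{\bar{\cableAttitude}}}\,\bar{\cableAttitude}/\norm{\bar{\cableAttitude}}=\cableForceEq$. I would then introduce the errors $\dpHerror:=\dpH-\velPosCtrlSat$, $\dpRerror:=\dpR-\velPosCtrlSat$ and $\vect{e}_l:=\cableAttitude-\bar{\cableAttitude}$. Subtracting the equilibrium balances cancels the constant terms ($\gravityH$, the steady ground reaction, and $\positionControllerSat$), leaving $\massH\dot{\dpHerror}=-\dampingH\dpHerror+(\cableForce-\cableForceEq)+(\groundForce-\bar{\groundForce})$ and $\inertiaA\dot{\dpRerror}=-\dampingA\dpRerror-(\cableForce-\cableForceEq)$, with $\dot{\vect{e}}_l=\dpRerror-\dpHerror$. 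As Lyapunov candidate I would take
\begin{align}
  V &= \tfrac12\massH\norm{\dpHerror}^2 + \tfrac12\dpRerror^\top\inertiaA\dpRerror \nonumber\\
  &\quad + U(\cableAttitude) - U(\bar{\cableAttitude}) - \cableForceEq^\top(\cableAttitude-\bar{\cableAttitude}), \nonumber
\end{align}
with $U(\cableAttitude)=\tfrac12\springCoeff(\norm{\cableAttitude}-\length)^2$ the elastic potential, so that $\nabla_{\cableAttitude}U=\cableForce$ and the shifted potential has a strict local minimum at $\bar{\cableAttitude}$ (its Hessian on the taut branch is positive definite); hence $V$ is locally positive definite in $(\vect{e}_l,\dpHerror,\dpRerror)$.

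The pay-off is that the power exchanged through the cable cancels exactly. Using $\nabla_{\cableAttitude}U-\cableForceEq=\cableForce-\cableForceEq$ and $\dot{\vect{e}}_l=\dpRerror-\dpHerror$, the term $(\cableForce-\cableForceEq)^\top(\dpRerror-\dpHerror)$ from the potential annihilates the indefinite contributions of the two velocity-error equations, and I expect to obtain $\dot V=-\dpHerror^\top\dampingH\dpHerror-\dpRerror^\top\dampingA\dpRerror\le 0$, negative semidefinite by positive definiteness of $\dampingH$ and $\dampingA$. I would then invoke LaSalle: on $\{\dot V=0\}$ one has $\dpHerror=\dpRerror=\vZero$, and forcing $\dot{\dpRerror}=\vZero$ in the robot error equation yields $\cableForce=\cableForceEq$, whence $\cableAttitude=\bar{\cableAttitude}$ by local invertibility of the taut cable map; thus the largest invariant set contained in $\{\dot V=0\}$ reduces to the single point $(\vZero,\vZero,\vZero)$, giving local asymptotic stability of $\trajectoryPosCtrlSat$ in the reduced coordinates.

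I expect two delicate points. The first is the ground constraint: I must argue that the human velocity error $\dpHerror$ remains in the $xy$-plane while the perturbation $\groundForce-\bar{\groundForce}$ of the normal reaction remains along $\zW$, so that $\dpHerror^\top(\groundForce-\bar{\groundForce})=0$ and the constraint force does no work in $\dot V$ (this is why the human $z$-dynamics can be dropped from the reduced state). The second is the unilateral, non-smooth cable law \eqref{eqn:cableForce}, which confines the whole argument to a neighbourhood of the taut equilibrium where $U$ is smooth and strictly convex and the force map is invertible; this is precisely why the conclusion is local rather than global, and why the equilibrium must sit strictly inside the taut branch ($\cableForceEq\neq\vZero$).
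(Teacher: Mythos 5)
Your proposal is correct and follows essentially the same route as the paper: verify the constant-velocity trajectory satisfies the equilibrium balances, shift to error coordinates in which it becomes a fixed point, take a kinetic-plus-shifted-cable-potential Lyapunov function whose derivative collapses to $-\dpHerror^\top\dampingH\dpHerror-\dpRerror^\top\dampingA\dpRerror$, and conclude by LaSalle. The only substantive difference is that the paper writes the cable potential as the integral $\int_{\cableAttitudeNormInitial}^{\cableAttitudeNormOf{t}}\tensionOf{\tau}\,d\tau$ (valid on both the slack and taut branches, supporting its global claim), whereas your smooth quadratic taut-branch potential $U$ restricts you, as you correctly note, to a local statement.
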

\begin{proof}
	We firstly have to show that the trajectory $\trajectoryPosCtrlSat$ 
	is of equilibrium for the system~\eqref{eqn:closedLoopSystemDyn} under the condition $\uA(t) = \positionControllerSat$.
	At equilibrium, accelerations must be indefinitely zero, i.e., $\ddpR = \ddpH = \dddpR = \dddpH = \vZero$.  
	Applying this condition to~\eqref{eqn:closedLoopSystemDyn} and its derivative, the following must be true:
	\begin{align}
		\vZero &= \add{-\gravityH} - \dampingH\dpH + \cableForce \add{+ \groundForce} \label{eqn:equilibriaCond:ddpH}\\
		\vZero &= - \dampingA\dpR - \cableForce + \positionControllerSat \label{eqn:equilibriaCond:ddpR}\\
		\vZero &= \dcableForce. \label{eqn:equilibriumConditions:dcableForce}
	\end{align}

	Firstly notice that $\cableForce$ cannot be zero. In fact, if $\cableForce = \vZero$ then, from \eqref{eqn:equilibriaCond:ddpH} and \eqref{eqn:equilibriaCond:ddpR}, $\dpH = \vZero$ and $\dpR = \dampingA^{-1}\positionControllerSat$. 
	Since $\dpH = \vZero$ and $\dpR \neq 0$, $\dCableAttitude = \dpR$ and $\norm{\dCableAttitude} = \norm{\dpR} \neq 0$. 
	This means that the cable will stretch and eventually will become taut, i.e., $\norm{\cableAttitude} > \length$ thus $\cableForce \neq \vZero$.
	 
	Let us now consider~\eqref{eqn:equilibriumConditions:dcableForce} and calculate the time derivative of~\eqref{eqn:cableForce}. We obtain:
	\begin{align}
		\vZero &= \springCoeff\frac{\cableAttitude}{\norm{\cableAttitude}}\left(\frac{d}{dt}\norm{\cableAttitude}\right) +  \springCoeff(\cableAttitudeNorm - \length)\left(\frac{d}{dt} \frac{\cableAttitude}{\norm{\cableAttitude}} \right)  \\
			&= \frac{\cableAttitude}{\norm{\cableAttitude}}\frac{\cableAttitude^\top}{\norm{\cableAttitude}}\dCableAttitude + (\cableAttitudeNorm - \length) \frac{1}{\norm{\cableAttitude}}\left( \eye{} - \frac{\cableAttitude\cableAttitude^\top}{\norm{\cableAttitude}^2} \right) \dCableAttitude. \label{eqn:equilibriaCond:dcableForce:partial:1}
	\end{align}
	To simplify the analysis of~\eqref{eqn:equilibriaCond:dcableForce:partial:1}, we define the variables $c = \length/\norm{\cableAttitude}$ and $\matr{A} = {\cableAttitude\cableAttitude^\top}/{\norm{\cableAttitude}^2}$. We can then rewrite~\eqref{eqn:equilibriaCond:dcableForce:partial:1} as
	\begin{align}
		\vZero &= \left[c\matr{A} + (1 -c)\eye{} \right]\dCableAttitude. \label{eqn:equilibriaCond:dcableForce:partial:2}
	\end{align}
	We can show that~\eqref{eqn:equilibriaCond:dcableForce:partial:2} holds only if $\dCableAttitude = \vZero$, i.e., if $\dpH = \dpR$. In fact, it cannot be that $c\matr{A} + (1 -c)\eye{} = \vZero$ because $\matr{A}$ cannot be diagonal.
	
	Finally, replacing the condition $\dpH = \dpR$ into~\eqref{eqn:equilibriaCond:ddpH}, and summing both sides of~\eqref{eqn:equilibriaCond:ddpH} and~\eqref{eqn:equilibriaCond:ddpR} \add{under the restriction that the human is constrained to the ground,} we obtain~\eqref{eqn:equilibria:positionControllerSat:velocity}.
	
	We are now ready to prove the asymptotic stability of $\trajectoryPosCtrlSat$.
	To do so we apply the following change of coordinates
	\begin{align}
		 \begin{array}{ll}
		 	\pRerror = \pR - \pRPosCtrlSat & \dpRerror = \dpR - \velPosCtrlSat \\
		 	\pHerror = \pH - \pHPosCtrlSat & \dpHerror = \dpH - \velPosCtrlSat,
		 \end{array}
	\end{align}
	where $\pRPosCtrlSat(t) = \pR(0) + \velPosCtrlSat \cdot t$ and $\pHPosCtrlSat(t) = \pH(0) + \velPosCtrlSat \cdot t$. 
	Considering the new state vector $\stateError = [\pHerror^\top \vSpace \dpHerror^\top  \vSpace \pRerror^\top  \vSpace \dpRerror^\top]^\top \in \stateSetError \subset \nR{12}$, its dynamics is $\dstateError = \dynamicsError{\stateError}$ where
	\begin{align}
		\dynamicsError{\stateError} = \matrice{\dpHerror \\  \frac{1}{\massH}\left(-\gravityH - \dampingH\dpHerror + \cableForce  +  \groundForce - \dampingH\velPosCtrlSat  \right) \\ \dpRerror \\ \inertiaA^{-1}\left( -\dampingA\dpRerror - \cableForce + \positionControllerSat - \dampingA\velPosCtrlSat \right) }. \label{eqn:closedLoopSystemDynError}
	\end{align}
	It is easy to verify that $\stateError = \vZero$ is of equilibrium, i.e.,  $\dynamicsError{\vZero} = \vZero $. Proving the asymptotic stability of $\vZero$ for \eqref{eqn:closedLoopSystemDynError} is equivalent to prove the asymptotic stability of $\trajectoryPosCtrlSat$ for \eqref{eqn:closedLoopSystemDyn} with $\uA = \positionControllerSat$.
	Therefore let us consider the following Lyapunov function:
	\begin{align}
		\lyapunovFunctionError = V_1(\stateError) + V_2(\stateError) + V_0, 
		\label{eqn:lyapunovFunction:posCtrlSat:error}
	\end{align}
	where
	\begin{align}
		V_1(\stateError) &=  \frac{1}{2}\left( \massH\dpHerror^\top\dpHerror +  \dpRerror^\top \inertiaA \dpRerror \right) \\
		V_2(\stateError) &= \int_{\cableAttitudeNormInitial}^{\cableAttitudeNormOf{t}} \tensionOf{\tau} d\tau   -\cableAttitude^\top \dampingH\velPosCtrlSat,
	\end{align}
	and $V_0$ is a constant such that $V(\vZero) = 0$. We defined $\cableAttitudeNormInitial = \cableAttitudeNormOf{0}$.
	Similarly as done in~\cite{tognon2021physical}, we can show that $\lyapunovFunctionError$ is continuously differentiable and radially unbounded. Furthermore, it is positive definite with unique global minimum in $\vZero$, such that $V(\vZero) = 0$.
	Here we skip the derivations for brevity.
	Again following similar steps, we can compute the time derivative of~\eqref{eqn:lyapunovFunction:posCtrlSat:error} that results to be:
	\begin{align}
	\begin{split}
		\dLyapunovFunctionError &= \dpHerror^\top \left( -\gravityH - \dampingH\dpHerror + \cableForce + \groundForce - \dampingH\velPosCtrlSat \right) \\
		& + \dpR^\top \left(  -\dampingA\dpRerror - \cableForce + \positionControllerSat - \dampingA\velPosCtrlSat \right) \\
		& + \dCableAttitude^\top\cableForce - \dCableAttitude^\top\dampingH\velPosCtrlSat .
	\end{split}  
	\end{align}
	Noticing that $\dCableAttitude = \dpRerror - \dpHerror$, after few simple algebraic steps we get:
	\begin{align}
		\dLyapunovFunctionError = - \dpHerror^\top\dampingH\dpHerror - \dpRerror^\top\dampingA\dpRerror.
	\end{align}
	$\dLyapunovFunctionError$ is clearly negative semidefinite in $\stateSetError$.
	
	Since $\dLyapunovFunctionError$ is only negative semidefinite, to prove the asymptotic stability we rely on \textit{LaSalle's invariance principle}~\cite{2002-Kha}. 
	The Lyapunov candidate $\lyapunovFunctionError$ is a continuously differentiable function.
	Let us define a positively invariant set $\invariantSet = \{\stateError \in \stateSetError \; | \; \lyapunovFunctionError \leq \alpha \text{ with } \alpha\in\nR{}_{>0}\}$.
	By construction, $\invariantSet$ is compact since~\eqref{eqn:lyapunovFunction:posCtrlSat:error} is radially unbounded. Notice that $\invariantSetZero$ contains only $\vZero$.
	Then we need to find the largest invariant set $\maxInvariantSet$ in $\dVZeroSet = \{ \stateError \in \invariantSet \; | \; \dLyapunovFunctionError = 0\}$. 
	A trajectory $\stateError(t)$ belongs identically to $\dVZeroSet$ if $\dot{V}(\stateError(t)) \equiv 0 \Leftrightarrow 
	\dpHerror(t) = \dpRerror(t) \equiv \vZero \Leftrightarrow
	\ddpHerror = \ddpRerror = \vZero$. 
	As we saw previously in the calculation of the equilibria, this is verified only if $\stateError = \vZero$.
	Therefore $\maxInvariantSet$ contains only $\vZero$.
	All conditions of LaSalle's principle are satisfied and $\vZero$ is globally asymptotically stable. 
	Looking back at the original system~\eqref{eqn:closedLoopSystemDyn} with $\uA = \positionController = \positionControllerSat$, we can conclude that the constant velocity trajectory $\trajectoryPosCtrlSat$ where $\dpH(t) = \dpR(t) = \velPosCtrlSat$ for every $t$, is asymptotically stable.
\end{proof}

\begin{rmk}
	When $\uA = \positionController = \positionControllerSat$, from \theo\ref{thm:stability:posCtrlSat}, we can compute the force that eventually is applied to the human. 
	Replacing the condition~\eqref{eqn:equilibria:positionControllerSat:velocity} into \eqref{eqn:closedLoopSystemDyn} we obtain:
	%
	\begin{align}
		\cableForce = \positionControllerSat - \dampingA\velPosCtrlSat,
		\label{eqn:cableForce_posCtrlSat}
	\end{align}
	which implies $\norm{\cableForce} < \norm{\positionControllerSat}$\add{, i.e.,} the human feels a lower force w.r.t. the desired one which reduces the effectiveness of the guiding strategy. 
	Moreover, the more the human walks fast (modeled as a low damping $\dampingH$), the less the force they feel on the cable.
Intuitively, looking at \eqref{eqn:admittance}, the problem comes from the fact that the robot wants to apply the desired force $\uA$ but trying to stay still (due to damping term). 
While the human walks, the two actions are opposite and the system converges to a state where the two are in equilibrium, but none of the two objectives are fully satisfied.
\end{rmk}

To avoid this problem, the robot must consider the human's speed while performing the guiding task. 
Along this line, in the next subsection we propose a modification of~\eqref{eqn:controller}, $\positionControllerHvel$, and we analyze the respective stability properties.

\section{CONTROL}

We propose to modify the control action \eqref{eqn:controller} as follow:
\begin{align}
	\uA = \positionControllerHvel =  \positionController + \dampingA\dpH,
	\label{eqn:controllerVelFeedback}
\end{align}
such that, replacing~\eqref{eqn:controllerVelFeedback} into \eqref{eqn:admittance} yields new robot dynamics:
\begin{align}\label{eqn:admittanceVelFeedback}
	\ddpR = \inertiaA^{-1}\left( \dampingA(\dpH - \dpR) - \cableForce + \positionController \right).
\end{align}
Notice that the latter includes a control action proportional to the velocity difference between human and robot.

\subsection{Equilibria and Stability Analysis under \texorpdfstring{$\positionControllerHvel$}{s}}
To keep a symmetry with the previous section, we first shall study the steady state solution of the system under $\positionControllerHvel$ for regular conditions and verify that $\stateEq$ is still asymptotically stable.
We then consider the case when $\positionController = \positionControllerSat$ to show that $\positionControllerHvel$ solves the problem of reduced guiding force, which in turn results in a more continuous guidance.
%

\subsubsection{Steady state solution}

Given the new controller~\eqref{eqn:controllerVelFeedback}, in the following theorem we verify that the desired position for the human is still asymptotically stable.

\begin{thm}\label{thm:stability:velFeedback}
	Given the human position reference $\pHRef \in \nR{3}$ such that $\zW^\top\pHRef = 0$ \add{and $\pRRef$ as in \eqref{eqn:regulation:pRRef}}, the zero velocity equilibrium $\stateEq = [\pHRef^\top \vSpace \vZero^\top \vSpace \pRRef^\top \vSpace \vZero^\top]^\top \in \stateSet$ is asymptotically stable for the system \eqref{eqn:closedLoopSystemDyn} under the control law  \eqref{eqn:controllerVelFeedback}, if\footnote{Given a generic matrix $\matr{M}$, we use the standard notation $\matr{M} \succ 0$ if $\matr{M}$ is positive definite.} $\dampingH - \frac{1}{4}\dampingA \succ 0$.
\end{thm}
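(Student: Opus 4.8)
The plan is to reproduce the Lyapunov/LaSalle argument of \theo\ref{thm:stability:posCtrlSat} and~\cite{tognon2021physical}, exploiting the fact that \eqref{eqn:controllerVelFeedback} differs from \eqref{eqn:controller} only through the extra term $\dampingA\dpH$ entering the robot dynamics \eqref{eqn:admittanceVelFeedback}. First I would verify that $\stateEq$ is still an equilibrium: at zero velocity the added feedback $\dampingA\dpH$ vanishes, so the closed loop reduces to the one of \theo\ref{thm:stability}, and the choice \eqref{eqn:regulation:pRRef} makes $\cableAttitude$ vertical with $\cableForce=\cableForceRef$, so that the force balance of \eqref{eqn:closedLoopSystemDyn} is satisfied at $\stateEq$.

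Next I would keep the very same Lyapunov function used for the regulation result in~\cite{tognon2021physical}: the kinetic energy $\tfrac12(\massH\dpH^\top\dpH+\dpR^\top\inertiaA\dpR)$, the cable elastic potential $\int_{\cableAttitudeNormInitial}^{\cableAttitudeNormOf{t}}\tensionOf{\tau}\,d\tau$, the position-control spring energy $\tfrac12\errorR^\top\KpH\errorR$, a linear forcing term $-\cableForceRef^\top\pR$ (needed so that $\nabla_{\pR}V=\cableForce-\positionController$ vanishes at $\stateEq$), and a constant enforcing $V(\stateEq)=0$. Since this function depends on the state only, its positive definiteness (inherited under $0<\cableForceZDes<\massH g$), radial unboundedness, and $C^1$ regularity through the slack/taut transition carry over verbatim from~\cite{tognon2021physical}; only its derivative is affected by the new control term.

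Then I would differentiate $V$ along \eqref{eqn:closedLoopSystemDyn} with $\uA=\positionControllerHvel$. Using the ground constraint $\dpH^\top\zW=0$ (which kills the gravity and ground-reaction contributions), the cable identity $\cableForce^\top\dCableAttitude=\tfrac{d}{dt}\int_{\cableAttitudeNormInitial}^{\cableAttitudeNormOf{t}}\tensionOf{\tau}\,d\tau$ with $\dCableAttitude=\dpR-\dpH$, and the cancellation of the $\KpH$ and forcing terms exactly as in the reference, all contributions cancel except the dissipation terms and the new cross-coupling $\dpR^\top\dampingA\dpH$ produced by $\dampingA\dpH$, giving
\begin{align}
	\dLyapunovFunction = -\matrice{\dpH\\\dpR}^\top\matrice{\dampingH & -\tfrac12\dampingA\\ -\tfrac12\dampingA & \dampingA}\matrice{\dpH\\\dpR}.
\end{align}
The crux is determining when this form is negative definite: since $\dampingA\succ0$, the Schur complement of the block matrix is $\dampingH-\tfrac14\dampingA\dampingA^{-1}\dampingA=\dampingH-\tfrac14\dampingA$, so the matrix is positive definite precisely under the hypothesis $\dampingH-\tfrac14\dampingA\succ0$, whence $\dLyapunovFunction\le0$ with equality only when $\dpH=\dpR=\vZero$. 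I expect this Schur-complement step to be the only nontrivial content; everything else is bookkeeping.

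Finally I would invoke LaSalle's invariance principle as in \theo\ref{thm:stability:posCtrlSat}: on a compact sublevel set of $V$, $\dLyapunovFunction=0$ forces $\dpH=\dpR=\vZero$, and requiring invariance forces zero accelerations; the resulting balance $\cableForce=\positionController$ together with the uniqueness of the equilibrium position established in \theo\ref{thm:stability} and~\cite{tognon2021physical} pins the state to $\stateEq$. Hence the largest invariant set contained in $\{\dLyapunovFunction=0\}$ is $\{\stateEq\}$, and $\stateEq$ is asymptotically stable.
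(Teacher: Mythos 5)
Your proposal is correct and follows essentially the same route as the paper: the same Lyapunov candidate (your linear term $-\cableForceRef^\top\pR$ differs from the paper's $-\cableAttitude^\top\cableForceRef$ only by a constant, since $\zW^\top\pH$ is fixed by the ground constraint), the same quadratic form for $\dLyapunovFunction$, the same Schur-complement criterion yielding $\dampingH-\tfrac{1}{4}\dampingA\succ 0$, and the same LaSalle conclusion. No gaps.
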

\begin{proof}
	Let us first compute the equilibrium state.
	Plugging \eqref{eqn:controllerVelFeedback} into \eqref{eqn:closedLoopSystemDyn} yields the final closed-loop dynamics:
	\begin{align}
		\dynamics{\state} &= \matrice{\dpH \\  \frac{1}{\massH}\left(-\gravityH - \dampingH\dpH + \cableForce +  \groundForce\right) \\ \dpR \\ \inertiaA^{-1}\left( \dampingA(\dpH - \dpR) - \cableForce + \KpH \errorR + \cableForceRef \right) }.
		\label{eqn:closedLoopSystemDyn:regulation}
	\end{align}
	Imposing the stability condition into \eqref{eqn:closedLoopSystemDyn:regulation}, i.e., $\dstate = \dynamics{\state} =\vZero$, it is easy to verify that $\stateEq$ is the only solution. 
	In fact, noticing that at the equilibrium $\dpH = \vZero$, we can follow the same derivations as in~\cite{tognon2021physical}, not reported here for brevity.  
	
	To study the stability of $\stateEq$ we employ Lyapunov theory considering the following Lyapunov function:
	\begin{align}
		\lyapunovFunction = V_3(\state) + V_4(\state) + V_0, 
		\label{eqn:lyapunovFunction:regulation}
	\end{align}
	where,
	\begin{align}
		V_3(\state) &=  \frac{1}{2}\left( \massH\dpH^\top\dpH +  \dpR^\top \inertiaA \dpR  + \errorR^\top \KpH \errorR \right) \\
		V_4(\state) &= \int_{\cableAttitudeNormInitial}^{\cableAttitudeNormOf{t}} \tensionOf{\tau} d\tau   -\cableAttitude^\top\add{\cableForceRef} \\
		V_0 &= \cableForceZDes^2/\springCoeff + 2 \length \cableForceZDes.
	\end{align}
	In~\cite{tognon2021physical} we showed that $\lyapunovFunction$ is continuously differentiable and radially unbounded. Furthermore, it is positive definite with unique global minimum in $\stateEq$, such that $V(\stateEq) = 0$.
	
	Following similar steps as in~\cite{tognon2021physical}, we can compute the time derivative of~\eqref{eqn:lyapunovFunction:regulation}, that results to be:
	\begin{align}
		\begin{split}
		\dLyapunovFunction =& \dpH^\top \left( -\gravityH - \dampingH\dpH + \cableForce + \groundForce \right) \\
			& + \dpR^\top \left(  \dampingA(\dpH - \dpR) - \cableForce + \KpH \errorR + \cableForceRef \right) \\
			& - \dpR^\top \KpH \errorR + \dCableAttitude^\top\cableForce - \dCableAttitude^\top\cableForceRef.
		\end{split}  
	\end{align}
	After some simplifications and rearranging some terms we get the following quadratic form:
	\begin{align}
		\dLyapunovFunction = -\matrice{\dpH^\top & \dpR^\top} \underbrace{\matrice{\dampingH & -\frac{1}{2}\dampingA \\ -\frac{1}{2}\dampingA & \dampingA }}_{\textstyle\dampingLyapunov} \matrice{\dpH \\ \dpR}.
	\end{align}
	For $\dLyapunovFunction$ to be negative semidefinite in $\stateSet$, $\dampingLyapunov \in \nR{6 \times 6}$ must be positive definite.
	Using the Schur complement method we can rewrite $\dampingLyapunov$ as
	\begin{align}
		\dampingLyapunov = 	\matrice{\eye{} & -\frac{1}{2}\eye{} \\ \vZero & \eye{}}
							\matrice{\dampingH - \frac{1}{4}\dampingA & \vZero \\ \vZero & \dampingA}\matrice{\eye{} & -\frac{1}{2}\eye{} \\ \vZero & \eye{}}^\top.
		\label{eqn:dampingLyapunov:schurComplement}
	\end{align}
	Recalling that $\dampingH \succ 0$ and $\dampingA \succ 0$, it is clear that $\dampingLyapunov\succ 0$ ($\dLyapunovFunction \leq 0$ in $\stateSet$) if $\dampingH - \frac{1}{4}\dampingA \succ 0$.
	Since $\dLyapunovFunction$ is only negative semidefinite, to conclude the prove we can rely on the {LaSalle's invariance principle} as done in~\cite{tognon2021physical}. 
\end{proof}

	In the common case in which $\dampingH$ and $\dampingA$ are diagonal matrices, we can define a more specific \add{stability condition}:
	\begin{thm}\label{thm:stability:velFeedback:diagonalDamping}
		Let us assume that $\dampingH = \dampingHScalar\eye{}$ and $\dampingA = \dampingAScalar\eye{}$ where $\dampingHScalar \in \nR{}_{>0}$ and $\dampingAScalar \in \nR{}_{>0}$.
		Given the human position reference $\pHRef \in \nR{3}$ such that $\zW^\top\pHRef = 0$ \add{and $\pRRef$ as in \eqref{eqn:regulation:pRRef}}, the zero velocity equilibrium $\stateEq = [\pHRef^\top \vSpace \vZero^\top \vSpace \pRRef^\top \vSpace \vZero^\top]^\top \in \stateSet$ is asymptotically stable for the system \eqref{eqn:closedLoopSystemDyn} under the control law  \eqref{eqn:controllerVelFeedback}, if $0 < \dampingAScalar < {4}\dampingHScalar$.
	\end{thm}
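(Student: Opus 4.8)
The plan is to treat this statement as a direct specialization of \theo\ref{thm:stability:velFeedback}, since its hypotheses are strictly a special case of the general setting: with $\dampingH = \dampingHScalar\eye{}$ and $\dampingA = \dampingAScalar\eye{}$ for $\dampingHScalar,\dampingAScalar \in \nR{}_{>0}$, both damping matrices remain symmetric positive definite, so every assumption underpinning the general Lyapunov argument carries over verbatim. In particular, the equilibrium computation, the Lyapunov function $\lyapunovFunction$, and the reduction of $\dLyapunovFunction$ to the quadratic form with matrix $\dampingLyapunov$ (and its Schur-complement factorization) are all unchanged; only the positive-definiteness condition on $\dampingLyapunov$ needs to be re-expressed in terms of the two scalars.

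First I would substitute the diagonal forms into the general stability condition $\dampingH - \tfrac{1}{4}\dampingA \succ 0$ established in the proof of \theo\ref{thm:stability:velFeedback}, obtaining
\begin{align}
    \dampingH - \tfrac{1}{4}\dampingA = \left(\dampingHScalar - \tfrac{1}{4}\dampingAScalar\right)\eye{}.
\end{align}
A scalar multiple of the identity is positive definite if and only if the scalar is positive, so the matrix inequality collapses to the scalar condition $\dampingHScalar - \tfrac{1}{4}\dampingAScalar > 0$, equivalently $\dampingAScalar < 4\dampingHScalar$. Combining this with the standing assumption $\dampingAScalar > 0$ yields the stated bound $0 < \dampingAScalar < 4\dampingHScalar$, and \theo\ref{thm:stability:velFeedback} then gives asymptotic stability of $\stateEq$.

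I do not expect a genuine obstacle here, as the argument is essentially a readable corollary of the general theorem. The single point deserving an explicit word is that, for matrices of the form $\lambda\eye{}$, definiteness reduces to positivity of $\lambda$, which is immediate from the eigenvalues. The value of stating it separately is practical rather than technical: it converts the abstract condition $\dampingH - \tfrac{1}{4}\dampingA \succ 0$ into a concrete, directly tunable scalar bound for the diagonal damping case that is most common in implementation.
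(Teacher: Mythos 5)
Your argument is exactly the paper's own proof: both substitute the scalar forms into the condition $\dampingH - \tfrac{1}{4}\dampingA \succ 0$ from \theo\ref{thm:stability:velFeedback}, observe that $(\dampingHScalar - \tfrac{1}{4}\dampingAScalar)\eye{}$ is positive definite precisely when $\dampingAScalar < 4\dampingHScalar$, and invoke the general theorem. The proposal is correct and takes essentially the same approach.
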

	\begin{proof}
	Under the assumption $\dampingH = \dampingHScalar\eye{}$ and $\dampingA = \dampingAScalar\eye{}$, then $\dampingH - \frac{1}{4}\dampingA = (\dampingHScalar - \frac{1}{4}\dampingAScalar)\eye{}$. 
	Therefore, $\dampingH - \frac{1}{4}\dampingA \succ 0$ if $\dampingAScalar < {4}\dampingHScalar$.
	In other words, if $\dampingAScalar < {4}\dampingHScalar$, the conditions of  \theo\ref{thm:stability:velFeedback} are satisfied so we can conclude that $\stateEq$ is asymptotically stable.
	\end{proof}
	
	Being $\dampingA$ in \eqref{eqn:admittanceVelFeedback} a sort of derivative gain, Theorems~\ref{thm:stability:velFeedback} and \ref{thm:stability:velFeedback:diagonalDamping} give some upper limits for its tuning.
	In particular, \theo\ref{thm:stability:velFeedback:diagonalDamping} says that the robot must react sufficiently slower than the human. 
	\add{In practice, we notice that $\dampingAScalar$ cannot be too low neither, as a minimum dissipation is required.}
	\remove{In practice, we notice that $\dampingAScalar$ cannot be too low neither. 
	In fact, a minimum dissipation is required when $\dpH$ is close to zero.}

\subsubsection{Steady state solution under constant control input}

We consider the case in which the human is sufficiently far from the desired point such that the position error $\errorR$ is saturated, i.e., $\positionController = \positionControllerSat$.
We shall show that the new controller $\positionControllerHvel$ solves the problem of reduced guiding force.

\begin{thm}\label{thm:stability:posCtrlSat:Hvel}
	Under the condition $\positionController = \positionControllerSat$ and $\uA = \positionControllerHvel=\positionControllerSat +\dampingA\dpH$, the constant velocity trajectory $\trajectoryPosCtrlSat$\remove{ where} 
	\begin{align}
		\dpH(t) = \dpR(t) = \velPosCtrlSat := \dampingH^{-1}\add{\matrice{\positionControllerSatX \vSpace \positionControllerSatY\vSpace 0}^\top}, \label{eqn:equilibria:positionControllerHvelSat:velocity}
	\end{align}
	\remove{for every }$\forall t\in\nR{}_{\geq 0}$, is asymptotically stable if $\dampingH - \frac{1}{4}\dampingA \succ 0$.
\end{thm}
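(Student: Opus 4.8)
The plan is to mirror the two-part structure used in the proofs of \theo\ref{thm:stability:posCtrlSat} and \theo\ref{thm:stability:velFeedback}: first verify that $\trajectoryPosCtrlSat$ is an equilibrium trajectory of \eqref{eqn:closedLoopSystemDyn} under $\uA = \positionControllerHvel = \positionControllerSat + \dampingA\dpH$, then shift it to the origin and exhibit a Lyapunov function whose derivative is negative semidefinite, closing the argument with LaSalle's invariance principle.

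For the equilibrium part, I impose $\ddpH = \ddpR = \dddpH = \dddpR = \vZero$, which yields $\vZero = -\gravityH - \dampingH\dpH + \cableForce + \groundForce$, $\vZero = \dampingA(\dpH - \dpR) - \cableForce + \positionControllerSat$, and $\vZero = \dcableForce$. The last condition is handled exactly as in \theo\ref{thm:stability:posCtrlSat}: differentiating \eqref{eqn:cableForce} and factoring the result shows that $\dCableAttitude = \vZero$, i.e. $\dpH = \dpR$. Substituting $\dpH = \dpR$ into the robot equation makes the coupling term $\dampingA(\dpH - \dpR)$ vanish and gives the crucial identity $\cableForce = \positionControllerSat$; unlike \eqref{eqn:cableForce_posCtrlSat}, the human now feels the full desired force. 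Projecting the human equation onto the ground plane (using $\dpH^\top\zW = 0$, so that $\gravityH$ and $\groundForce$ drop out) then returns $\dampingH\dpH = \matrice{\positionControllerSatX \vSpace \positionControllerSatY \vSpace 0}^\top$, i.e. \eqref{eqn:equilibria:positionControllerHvelSat:velocity}.

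Next I apply the same change of coordinates as in \theo\ref{thm:stability:posCtrlSat}, with $\pHPosCtrlSat(t)=\pH(0)+\velPosCtrlSat\cdot t$ and $\pRPosCtrlSat(t)=\pR(0)+\velPosCtrlSat\cdot t$, obtaining an error system $\dstateError = \dynamicsError{\stateError}$ with $\stateError = \vZero$ as equilibrium. I would then take the Lyapunov candidate
\begin{align*}
	\lyapunovFunctionError &= \tfrac{1}{2}\left(\massH\dpHerror^\top\dpHerror + \dpRerror^\top\inertiaA\dpRerror\right) \\
	&\quad + \int_{\cableAttitudeNormInitial}^{\cableAttitudeNormOf{t}}\tensionOf{\tau}\,d\tau - \cableAttitude^\top\positionControllerSat + V_0,
\end{align*}
with $V_0$ chosen so that $V(\vZero)=0$; as in the cited proofs this is continuously differentiable, radially unbounded, and positive definite (its stationary point along the cable direction is exactly where $\cableForce = \positionControllerSat$).

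The decisive computation is the derivative. Using $\dCableAttitude = \dpRerror - \dpHerror$, the internal cable-force terms cancel, and --- this is the step that requires care --- the compensation term must be $-\cableAttitude^\top\positionControllerSat$ (rather than $-\cableAttitude^\top\dampingH\velPosCtrlSat$, as one might guess by analogy with \theo\ref{thm:stability:posCtrlSat}), so that the residual linear term collapses to $\positionControllerSatZ(\dpHerror^\top\zW)$, which vanishes by the ground constraint on the human velocity. What survives is the quadratic form $\dLyapunovFunctionError = -\matrice{\dpHerror^\top & \dpRerror^\top}\dampingLyapunov\matrice{\dpHerror \\ \dpRerror}$ with the same $\dampingLyapunov$ as in \theo\ref{thm:stability:velFeedback}. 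Its positive definiteness follows from the Schur-complement factorization \eqref{eqn:dampingLyapunov:schurComplement}, hence $\dLyapunovFunctionError \leq 0$ precisely under $\dampingH - \tfrac{1}{4}\dampingA \succ 0$. Finally, LaSalle's invariance principle applies verbatim: $\dLyapunovFunctionError = 0$ forces $\dpHerror = \dpRerror = \vZero$, which by the equilibrium analysis holds only at $\stateError = \vZero$, establishing asymptotic stability. I expect the main obstacle to be exactly the identification of the correct linear compensation term in the potential part of $\lyapunovFunctionError$ and the verification that the constant forcing cancels, since the naive choice would leave an uncontrolled vertical term arising from the non-planar robot motion.
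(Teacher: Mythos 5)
Your proposal is correct and follows essentially the same route as the paper: the same equilibrium reduction via $\dcableForce=\vZero\Rightarrow\dpH=\dpR$, the same change of coordinates, the same quadratic form handled through the Schur factorization \eqref{eqn:dampingLyapunov:schurComplement}, and LaSalle to conclude. The one point where you deviate---writing the linear compensation term as $-\cableAttitude^\top\positionControllerSat$ rather than reusing $-\cableAttitude^\top\dampingH\velPosCtrlSat$ from \eqref{eqn:lyapunovFunction:posCtrlSat:error} verbatim as the paper states---is in fact the more careful choice, since here $\dampingH\velPosCtrlSat$ and $\positionControllerSat$ differ by $\positionControllerSatZ\zW$, and only your version both places the minimum of the potential at the true equilibrium force $\cableForce=\positionControllerSat$ and cancels the residual vertical term against the human's ground constraint.
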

\begin{proof}
	Similarly to the proof of \theo\ref{thm:stability:posCtrlSat}, we first apply the equilibria condition
	\begin{align}
		\vZero &= \add{-\gravityH} - \dampingH\dpH + \cableForce \add{+ \groundForce} \label{eqn:Hvel:equilibriaCond:ddpH}\\
		\vZero &=  \dampingA(\dpH - \dpR) - \cableForce + \positionControllerSat \label{eqn:Hvel:equilibriaCond:ddpR}\\
		\vZero &= \dcableForce. \label{eqn:Hvel:equilibriumConditions:dcableForce}
	\end{align}
	As before, \eqref{eqn:Hvel:equilibriumConditions:dcableForce} implies $\dpR = \dpH$. 	
	Then, replacing the condition $\dpH = \dpR$ into~\eqref{eqn:Hvel:equilibriaCond:ddpH}, and summing both sides of~\eqref{eqn:Hvel:equilibriaCond:ddpH} and~\eqref{eqn:Hvel:equilibriaCond:ddpR} \add{given that the human is constrained to the ground,} we obtain~\eqref{eqn:equilibria:positionControllerHvelSat:velocity}.
	
	To proof the asymptotic stability of $\trajectoryPosCtrlSat$ we follow the same procedure as in the proof of \theo\ref{thm:stability:posCtrlSat}. 
	We apply the same change of coordinates, $\stateError$, and we consider the Lyapunov function $\lyapunovFunctionError$ in~\eqref{eqn:lyapunovFunction:posCtrlSat:error}, which is continuously differentiable, positive definite and with unique global minimum in $\vZero$, such that $V(\vZero) = 0$.
	Following the same derivations, the time derivative results
	\begin{align}
		\dLyapunovFunctionError = -\matrice{\dpHerror^\top & \dpRerror^\top} \dampingLyapunov \matrice{\dpHerror \\ \dpRerror},
	\end{align}
	where $\dampingLyapunov$ as in \eqref{eqn:dampingLyapunov:schurComplement}. 
	Therefore, $\dLyapunovFunctionError$ is negative semidefinite if $\dampingH - \frac{1}{4}\dampingA \succ 0$. 
	The proof concludes applying the {LaSalle's invariance principle} as in  previous theorems.
\end{proof}
\begin{rmk}
    As before, we can compute the steady-state cable force associated with the constant velocity trajectory. Under the conditions of \theo\ref{thm:stability:posCtrlSat:Hvel}, \eqref{eqn:Hvel:equilibriaCond:ddpR} results:
    \begin{align}
        \cableForce = \positionControllerSat.
        \label{eqn:cableForce_posCtrlHVelSat}
    \end{align}
    This confirms that the new controller $\positionControllerHvel$ is capable of rendering the desired guidance force, independent of the human damping $\dampingH$, i.e., the human walking speed.
\end{rmk}

\add{\subsection{Robustness Analysis under \texorpdfstring{$\positionControllerHvel$}{s}}
\label{ssec:robustness}
In the presence of non-zero human forces $\humanForce\neq\vZero$ and tracking errors $\modelError\neq\vZero$, the closed-loop system dynamics in \eqref{eqn:closedLoopSystemDyn} become:
\begin{align}
	\dynamics{\state,\uA} &= \matrice{\dpH \\  \frac{1}{\massH}\left(-\gravityH - \dampingH\dpH + \cableForce  +  \groundForce + \humanForce \right) \\ \dpR \\ \inertiaA^{-1}\left( -\dampingA\dpR - \cableForce + \uA + \modelError \right) }.
	\label{eqn:closedLoopSystemDynPassivity}
\end{align}
Similarly to \cite{tognon2021physical}, the presented stability proofs can be extended to show the system's passivity w.r.t. these effects. 
\begin{thm}\label{thm:passivity:regulation}
System \eqref{eqn:closedLoopSystemDynPassivity} under control law $\positionControllerHvel$ in \eqref{eqn:controllerVelFeedback} is output-strictly passive w.r.t. the storage function \eqref{eqn:lyapunovFunction:regulation} and the input-output pair $(\vect{u},\vect{y})=([{\dpH^\top \vSpace \dpR^\top}]^\top,[{\humanForce^\top \vSpace \modelError^\top}]^\top)$, if $\dampingH-\frac{1}{4}\dampingA\succ 0$.
\end{thm}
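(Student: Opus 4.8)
The plan is to reuse the storage function $\lyapunovFunction$ of \eqref{eqn:lyapunovFunction:regulation} and to recycle almost verbatim the derivative computation already performed in the proof of \theo\ref{thm:stability:velFeedback}, the only difference being that the trajectories are now those of the perturbed closed loop \eqref{eqn:closedLoopSystemDynPassivity}. Establishing output-strict passivity amounts to producing an estimate of the form $\dLyapunovFunction \leq \vect{u}^\top\vect{y} - \epsilon\,\norm{[\dpH^\top \vSpace \dpR^\top]^\top}^2$ with $\epsilon>0$; hence the entire task reduces to recomputing $\dLyapunovFunction$ along \eqref{eqn:closedLoopSystemDynPassivity} and separating the power injected by $\humanForce$ and $\modelError$ from the genuinely dissipative part.

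First I would differentiate $\lyapunovFunction$ along \eqref{eqn:closedLoopSystemDynPassivity}. Comparing \eqref{eqn:closedLoopSystemDynPassivity} with the nominal dynamics \eqref{eqn:closedLoopSystemDyn:regulation} used in \theo\ref{thm:stability:velFeedback}, the sole modifications are the addition of $\humanForce$ to the human acceleration and of $\modelError$ to the robot acceleration; the controller \eqref{eqn:controllerVelFeedback} and the unilateral cable model are untouched. Since the $V_3$-part of the storage function contributes $\dpH^\top(\massH\ddpH)$ and $\dpR^\top(\inertiaA\ddpR)$ to $\dLyapunovFunction$, these two disturbances enter collocated with the velocities and produce exactly the two extra terms $\dpH^\top\humanForce + \dpR^\top\modelError$. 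Every other contribution --- gravity, ground reaction, the cable energy stored in $V_4$, and the $\cableForceRef$ and control terms --- cancels precisely as in \theo\ref{thm:stability:velFeedback}, because none of these pieces depends on $\humanForce$ or $\modelError$.

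This leaves
\begin{align}
	\dLyapunovFunction = \dpH^\top\humanForce + \dpR^\top\modelError - \matrice{\dpH^\top & \dpR^\top}\dampingLyapunov\matrice{\dpH \\ \dpR},
\end{align}
with $\dampingLyapunov$ as in \eqref{eqn:dampingLyapunov:schurComplement}. I would then identify the first two terms as the supply rate $\vect{u}^\top\vect{y}$ for the stated input--output pair, and invoke the Schur-complement factorisation \eqref{eqn:dampingLyapunov:schurComplement}: since $\dampingA\succ 0$ holds by assumption, the hypothesis $\dampingH-\frac{1}{4}\dampingA\succ 0$ gives $\dampingLyapunov\succ 0$, so the quadratic form is bounded below by $\epsilon\,\norm{[\dpH^\top \vSpace \dpR^\top]^\top}^2$ with $\epsilon=\lambda_{\min}(\dampingLyapunov)>0$. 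Thus $\dLyapunovFunction \leq \vect{u}^\top\vect{y} - \epsilon\,\norm{[\dpH^\top \vSpace \dpR^\top]^\top}^2$, the dissipation being a positive-definite quadratic form in the velocity pair, which is the desired output-strict passivity inequality.

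The main obstacle is not the final inequality but the faithful reuse of the cancellations of \theo\ref{thm:stability:velFeedback}. One has to check that differentiating the cable-energy integral in $V_4$ together with the $-\cableAttitude^\top\cableForceRef$ term reproduces, against $\dCableAttitude^\top\cableForce$, the same exact cancellation as in the nominal proof, and that this remains valid across the switching between taut and slack cable, so that the disturbances $\humanForce$ and $\modelError$ contribute only the clean collocated power $\vect{u}^\top\vect{y}$ and nothing spills into the dissipative form. Once that structural invariance under the added inputs is confirmed, positive definiteness of $\dampingLyapunov$ closes the argument.
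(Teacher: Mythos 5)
Your proposal is correct and is exactly the argument the paper intends: the paper itself omits the derivation (deferring to~\cite{tognon2021physical}), and filling it in amounts to re-running the $\dLyapunovFunction$ computation from the proof of \theo\ref{thm:stability:velFeedback} along the perturbed dynamics \eqref{eqn:closedLoopSystemDynPassivity}, where the disturbances enter collocated with the velocities and contribute only $\dpH^\top\humanForce+\dpR^\top\modelError$ while all other cancellations (gravity/ground terms vanishing against $\dpH^\top\zW=0$, the cable-energy term against $\dCableAttitude^\top\cableForce$, the $\KpH\errorR$ terms) go through unchanged, leaving the same quadratic form in $\dampingLyapunov$. The only remark worth adding is notational: since the dissipation $-\epsilon\,\norm{[\dpH^\top \vSpace \dpR^\top]^\top}^2$ is quadratic in the velocities, the velocities must play the role of the \emph{output} for the claim of output-strict passivity, so the labels $\vect{u}$ and $\vect{y}$ in the theorem statement are evidently interchanged relative to the standard convention --- your inequality is the correct one either way.
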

\begin{proof}
    The proof follows the same steps as shown in \cite{tognon2021physical} and the formal derivation is neglected here for brevity.
\end{proof}

}

\section{EXPERIMENTAL VALIDATION}\label{sec:experimental_validation}
\add{Indoor flight experiments are performed to confirm the theoretical conclusions about \add{the steady state performance of} the presented control law $\positionControllerHvel$ \add{under constant control input} and compare it against $\positionController$.
Due to lack of space, the validation of the asymptotic stability and robustness properties are shown only in the attached video.}

\subsection{Experimental Setup}
The employed system is shown in \fig\ref{fig:model}, consisting of a hexacopter aerial vehicle that is connected by a cable to a handle held by the human.
The handle is equipped with reflective markers, allowing to obtain a measure of the human position $\pH$ using a motion tracking system. 
Based on this, the human velocity $\dpH$ required by the controller $\positionControllerHvel$ in \eqref{eqn:controllerVelFeedback} is numerically computed using a Savitzky-Golay filter.
\remove{Notice that the velocity estimation method can easily be adapted based on the sensor suit available (e.g., using cameras onboard the robot) without having to change the controller itself.}

The cable connecting the human-handle with the aerial vehicle has a rest-length $\length=\SI{1.5}{\metre}$ and a negligible mass (less than \SI{10}{\gram}).
A measure of the cable force $\cableForce$ is obtained at $\SI{800}{\hertz}$ with a 6-axis Rokubi force-torque-sensor mounted between the cable and the drone.
The robot itself is equipped with an onboard PC, running the admittance filter together with a MPC position controller \cite{2012-KamStaAleSie}, effectively rendering the closed loop dynamics \eqref{eqn:admittance}\add{, while respecting the UAV's actuation limits}.
Position and velocity estimates $\pR,\dpR$ required by the control framework 
are obtained fusing pose measurements from the motion capture system with onboard IMU data (accelerometer and gyroscopes) at $\SI{100}{\hertz}$ in an EKF-based state estimator.

The described setup was used to collect validation datasets for three different human subjects, each being guided to a desired location \add{and instructed to follow the guiding force as much as possible, i.e., $\humanForce\approx\vZero$.}
Since the control performance depends on the human velocity $\dpH$, data is recorded at different walking speeds.
To ensure a consistent average walking speed $\dpHAvg=\frac{1}{T}\int_0^\expTime\norm{\dpH(t)}_2dt$ across all subjects during the duration $\expTime$ of the experiment, oral feedback is continuously provided to increase, decrease or maintain their current speed.
Specifically, experiments with $\dpHAvg=\SI{0.075}{\metre\per\second}$, $\dpHAvg=\SI{0.1}{\metre\per\second}$ and $\dpHAvg=\SI{0.2}{\metre\per\second}$ were conducted.
Although these are rather slow speeds, this allows to reach and maintain steady-state for a representative amount of time, despite the limited room size.

For each experiment, the $\positionController$ and $\positionControllerHvel$ control law was subsequently implemented and tested. 
Since we assume $\dpH^\top\zW=\vZero$ however, the two controllers are identical along $\zW$. 
We thus restrict our analysis to the $\xW$-$\yW$ plane with the corresponding planar cable force $\cableForcePlanar=[\cableForceX\vSpace\cableForceY]^\top$ and guidance force saturation $\positionControllerSatPlanar=[\positionControllerSatX\vSpace\positionControllerSatY]^\top$.
Hereby, the saturation is kept constant such that the human feels a horizontal force of $\norm{\positionControllerSatPlanar}_2=\SI{3}{\newton}$ across all flights. 
Similarly, the admittance controller inertia was set to the constant value of $\inertiaA=5\eye{}$ \si{\kilo\gram}.
However, since the admittance damping $\dampingA$ strongly affects the stability of the closed-loop system (see \theo\ref{thm:stability:velFeedback} and \theo\ref{thm:stability:posCtrlSat:Hvel}), its tuning depends on the performed experiment and is further discussed below.
All parameters and gains have been set to optimize system stability and user comfort. 

\subsection{Tuning of the Admittance Damping \texorpdfstring{$\dampingA$}{s}}
For the remainder of this section, we assume a diagonal admittance and human damping of the form $\dampingA=\dampingAScalar\eye{}$ and $\dampingH=\dampingHScalar\eye{}$.
The parameter $\dampingAScalar$ is the only controller gain that requires adaptation when changing the control law.
Hereby, the tendency for tuning the admittance damping is fundamentally different when using $\positionController$ or $\positionControllerHvel$. 

In the first case, \theo\ref{thm:stability:posCtrlSat} suggests that $\dampingAScalar$ should be chosen as low as possible to allow accurate tracking of $\cableForce$ (see \eqref{eqn:cableForce_posCtrlSat}). 
However, a minimum dissipative action is required to maintain the system stability and ensure a comfortable user experience.
In practice, $\dampingAScalarPosCtrl=\SI{13}{\kilo\gram\per\second}$ proved a good compromise between tracking performance and human comfort. 

On the other hand, the control law $\positionControllerHvel$ intuitively demands $\dampingAScalar$ as high as possible, in order to reach steady-state velocity and thus the desired cable force faster.
As indicated in \theo\ref{thm:stability:posCtrlSat:Hvel} however, a human-damping-dependent upper bound must be respected to maintain stability.
Through iterative trial-and-error, the real-world limits for comfortable operation were found to be $\dampingAScalarPosCtrlHVel=\SI{180}{\kilo\gram\per\second},\dampingAScalarPosCtrlHVel=\SI{100}{\kilo\gram\per\second}$ and $\dampingAScalarPosCtrlHVel=\SI{60}{\kilo\gram\per\second}$ for slow, medium and high human walking speeds, respectively.
It should be noted that these values correspond well with the theoretical limit ($\dampingAScalar<4\dampingHScalar$), as computed in \tab\ref{tab:damping_summary} and visually shown in \fig\ref{fig:damping}.
Hereby, the human damping is approximated as
$\dampingHScalar \approx \frac{\norm{\positionControllerSatPlanar}}{\dpHAvg}$
based on the the human model \eqref{eqn:humanModel},
resulting in the numerical values shown in the table.
However, since the human instinctively increases their internal damping and slows down when the system starts to become unstable (despite the oral feedback to maintain speed), the presented values might underestimate the actual $\dampingHScalar$.
This explains why tuning values above the computed threshold (e.g., for $\dpHAvg=\SI{0.075}{\metre\per\second}$) might still be physically stable.

\begin{figure}
    \centering
    \includegraphics[width=\linewidth]{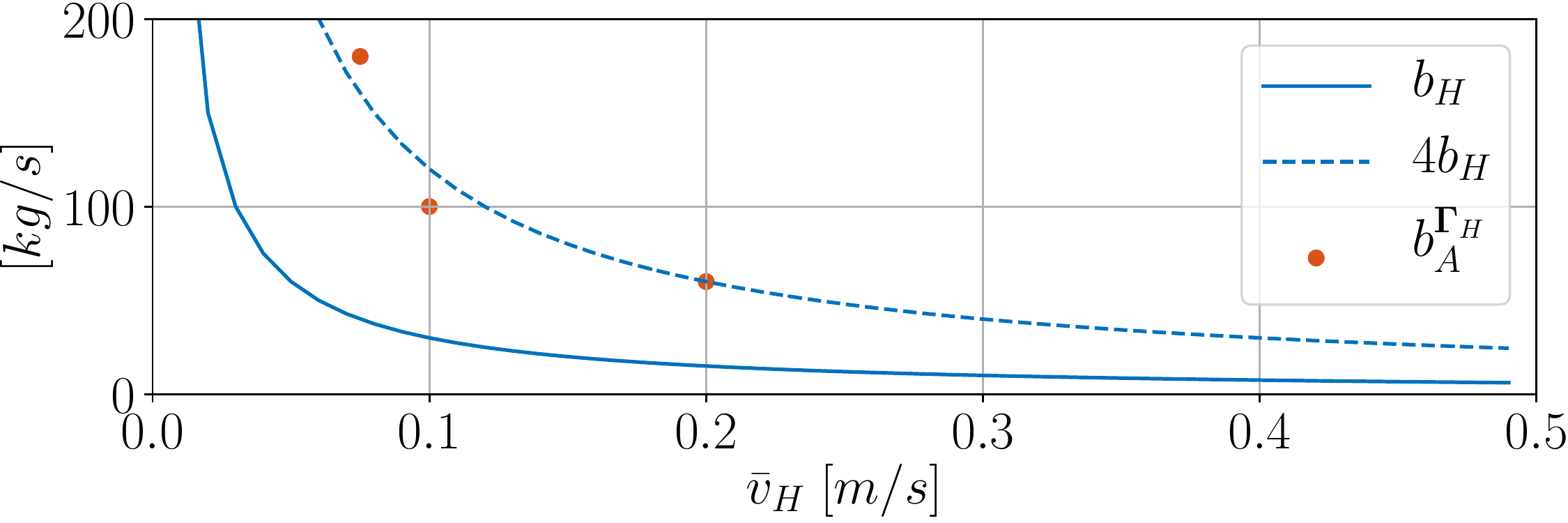}
    \caption{Admittance damping gains and theoretical limit at different walking speeds.}
    \label{fig:damping}
\end{figure}

\begin{table}[t]
    \centering
    \begin{tabular}{c|c|c|c}
    $\dpHAvg\ [\si{\metre\per\second}]$ &
    $\dampingHScalar\ [\si{\kilo\gram\per\second}]$ & 
    $4\dampingHScalar\ [\si{\kilo\gram\per\second}]$ & 
    $\dampingAScalarPosCtrlHVel\ [\si{\kilo\gram\per\second}]$ \\\hline\hline
    0.075 &
    40 &
    160 &
    180\\
    0.1 &
    30 &
    120 &
    100\\
    0.2 &
    15 &
    60 &
    60
    \end{tabular}%
    \caption{Admittance damping gains and theoretical limit at different walking speeds.}%
    \label{tab:damping_summary}%
\end{table}

\subsection{Experimental Results}
The performance of the two control laws $\positionController$ and $\positionControllerHvel$ \add{at steady state} in terms of force error and human velocity at different average walking speeds is shown in \fig\ref{fig:time_force_vel}.
The mean and standard deviation statistics of these quantities over the experiment duration $\expTime$ are summarized in \fig\ref{fig:all_force_vel}. 

\begin{figure*}[h]
    \centering
    \includegraphics[width=\linewidth]{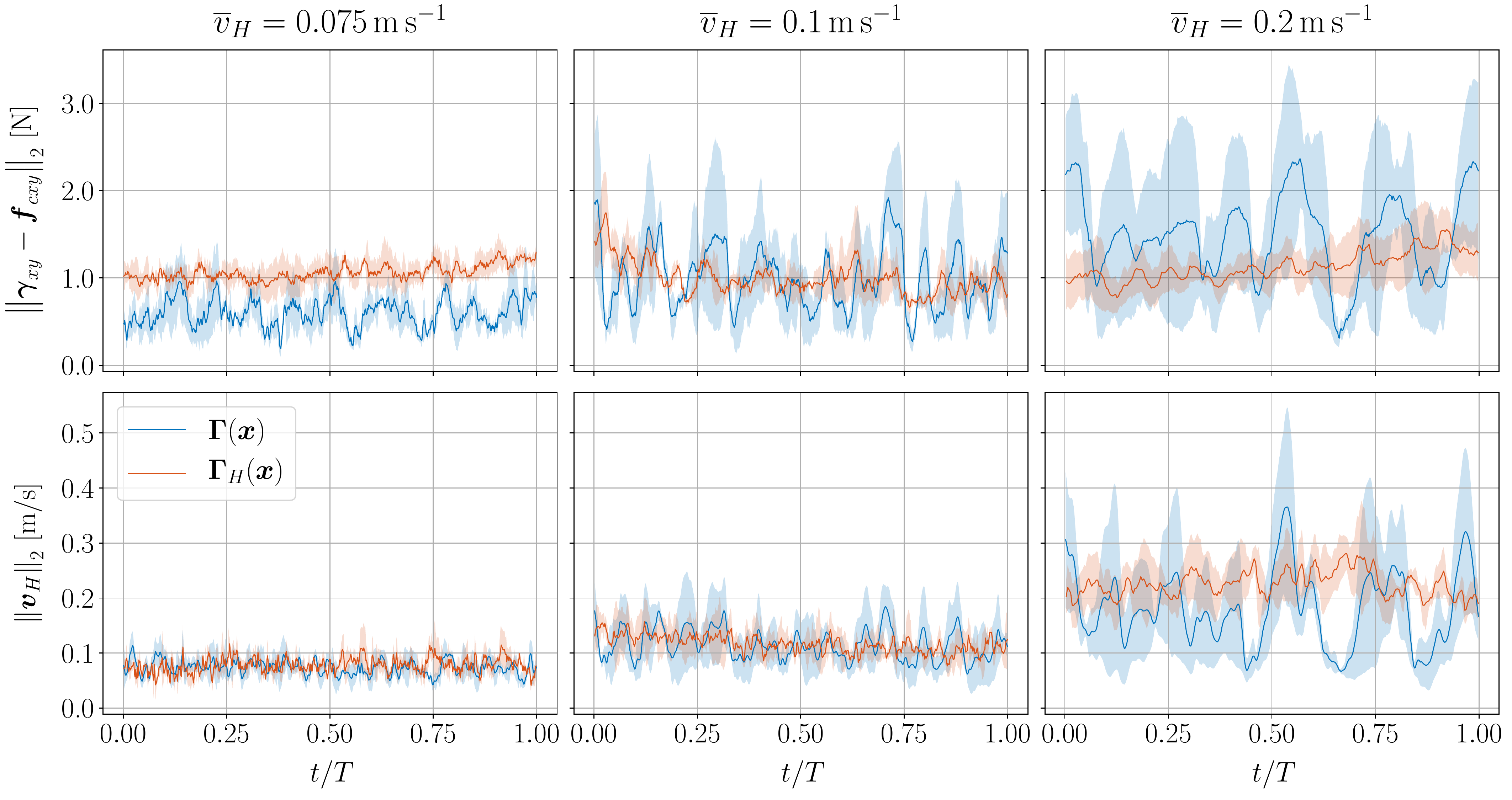}
    \caption{Evolution of force error \add{(top)} and human velocity \add{(bottom)} at different average walking speeds under $\positionController$ and $\positionControllerHvel$ control law \add{at steady-state}. The solid line indicates the average over all test subjects and the transparent area corresponds to the standard deviation.}
    \label{fig:time_force_vel}
\end{figure*}

\begin{figure}[h]
    \centering
    \includegraphics[width=\linewidth]{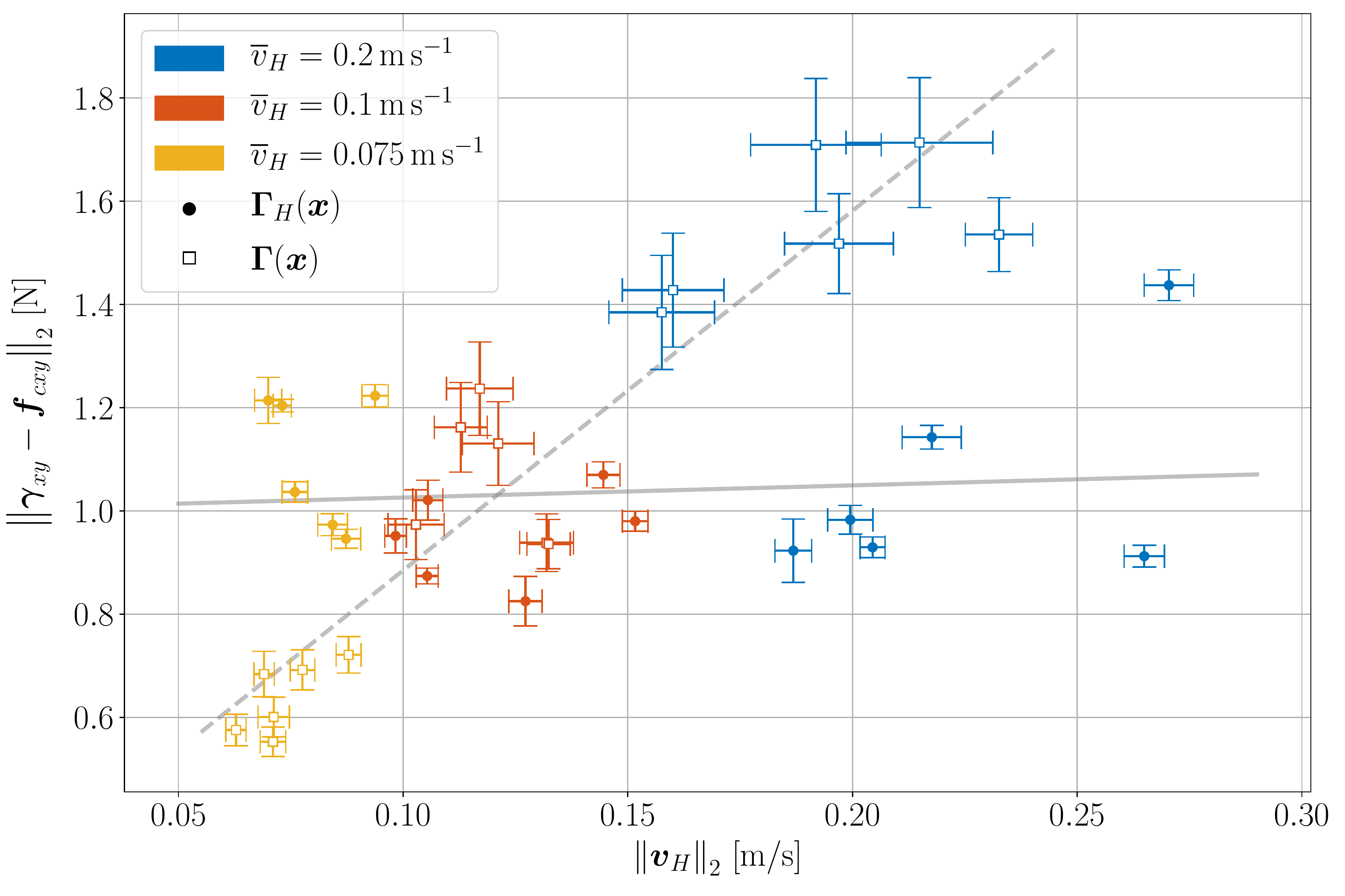}
    \caption{Force error and human velocity mean and $0.125\times$standard deviation (small scale has been used to improve visibility) statistics at different average walking speeds under $\positionController$ and $\positionControllerHvel$ control law \add{at steady state} and computed over the experiment duration $\expTime$. Each data point corresponds to one experiment, i.e., one test subject walking at a specific speed under a given control law. The black solid and dotted line represent the linear interpolation for $\positionControllerHvel$ and $\positionController$, respectively.}%
    \label{fig:all_force_vel}%
\end{figure}%

At low average walking speed, both control laws exhibit similar performance in that the cable tension can be maintained. 
Force error and human velocity show little oscillations, resulting in an efficient and comfortable guidance.

With increasing average walking speed however, larger force errors can be observed in \fig\ref{fig:all_force_vel} when using the controller $\positionController$.
Furthermore, an increase in both velocity and force error standard deviation indicates an increasingly oscillatory behavior.
When the human walks slowly, the  guidance force will raise toward the desired value (force error decreases). However, it in turn makes the human speed-up. 
The robot, having no feedback on the human speed, will keep its current motion, thus making the guiding force reduce again (force error increases). 
This reduction can make the cable go slack, eventually forcing the human to slow down or even stop until tension is re-established. The full cycle will then repeat.
This process is well visible in \fig\ref{fig:time_force_vel}. 
Based on the presented theory and the interpolation line in \fig\ref{fig:all_force_vel}, it is expected that these effects will become even more extreme with increasing walking speeds, i.e. decreasing human damping.

On the other hand, the proposed human velocity feedback controller $\positionControllerHvel$ maintains a consistent force tracking performance across all walking speeds.
Not only that, the almost constant standard deviations shown in \fig\ref{fig:all_force_vel} suggest minimized oscillations in both cable force and human velocity, which is indeed confirmed when looking at the time evolution of both signals in \fig\ref{fig:time_force_vel}.
Overall, this results in a more comfortable user experience compared to $\positionController$.

That being said however, the non-zero force tracking error under the control law $\positionControllerHvel$ \add{(see \fig\ref{fig:all_force_vel} and especially data point at $\norm{\dpH}\approx\SI{0.26}{\metre\per\second}$)} is in contradiction with the theoretical findings. According to \eqref{eqn:cableForce_posCtrlHVelSat}, $\norm{\positionControllerSatPlanar-\cableForcePlanar}_2$ should converge to zero during steady state operation.
This difference can be explained by the fact that the test subjects do not behave exactly like a second order system as it is assumed in the model. 
Notice that \add{small oscillations in human planar velocity due to their gait pattern~\cite{vyas2021}} are not captured.
In practice, high damping gain $\dampingAScalarPosCtrlHVel$ will cause the aerial vehicle to track these human velocities, explaining the observed misalignment between $\positionControllerSatPlanar$ and $\cableForcePlanar$\add{, as well as the small positive slope of the interpolation line in \fig\ref{fig:all_force_vel}.}
Nevertheless, the fact that the offset is \add{nearly} independent of the walking speed still confirms the general intuition.
Notice that this effect does not occur under the controller $\positionController$, as human velocities do not affect the robot motion.




In summary, \fig\ref{fig:all_force_vel} confirms the superiority of the newly proposed guidance law with included human velocity feedback $\positionControllerHvel$.
The cable remains taught at all times for different human walking speeds, therefore being independent of the inherent human damping.
As a result, the human walking pattern is more continuous and does not exhibit the stop-and-go behavior observed with the original guidance law $\positionController$. 
The overall user experience is more comfortable.

\section{CONCLUSIONS}\label{sec:conclusions}
This article discusses the control of a system composed of an aerial robot physically coupled by a cable to the hand of a human being. 
The control problem consists in making the robot capable to safely guide the human to a desired point using forces only as an indirect communication channel.
A first compliant control law that ensure the asymptotic stability of the desired human position was already developed in a previous work.
This paper aims at addressing some of the encountered issues linked to the absence of any feedback nor adaptation to the human-state. 
In particular, we noticed the cable force being less than the desired value when the human walks at a considerable speed.

First, \remove{an equilibrium analysis for the initial control law was performed to formally confirm} the suggested decrease in cable force with increasing human walking speed under saturated control input \add{is formally analyzed}.
\remove{Using Lyapunov theory, the asymptotic convergence to the corresponding steady state was proven.}
\add{In turn, the preexisting controller is modified to include additional feedback w.r.t. the human velocity. This reduces the guiding force error and makes the guidance more efficient. Indeed, formal study of the equilibria and stability confirms asymptotic convergence to zero force error. Furthermore, the system is passive w.r.t. bounded human intent forces and tracking errors due to non-idealities.}
\remove{To improve the force tracking performance and make the guidance more efficient, the preexisting controller is extended with a human velocity feedback.
Given a desired end point, we formally prove the asymptotic stability of the system for this novel controller, provided a human-damping-dependent tuning bound is respected.
Again considering the case when the control force is saturated due to the human being far away from the desired position, asymptotic convergence to a constant velocity trajectory under the same tuning restrictions is shown.
Furthermore, the equilibrium analysis confirms the cable force to reach the desired value at steady state, independently to the human walking speed.}

The theoretical findings are validated in real-world flight tests with a group of three subjects.
Experiments that mimic different human damping confirm the consistent and overall improved force tracking performance of the novel controller.
Cable tension is maintained at all times, thus making the guidance more efficient and comfortable for the human.

Although the proposed control law was successfully evaluated for different human damping values, it currently requires manual retuning if this parameter changes.
Future works will thus include the development of an online estimation and adaptation law to properly detect and react to changes in the human dynamic parameters.
\remove{It is however expected that the extensive testing of such controllers will no longer be possible indoors due to room size limitations. 
We thus foresee the integration of additional sensors for outdoor operation and specifically new methods to estimate the human velocity.}

\section*{ACKNOWLEDGMENT}\label{sec:acknowledgment}
The authors would like to thank L. Streichenberg for his contribution on the software framework and experimental hardware in the early stage of this work.

\bibliographystyle{IEEEtran}
\bibliography{./bibAlias,./bibCustom}

\end{document}